\def\BibTeX{{\rm B\kern-.05em{\sc i\kern-.025em b}\kern-.08em
    T\kern-.1667em\lower.7ex\hbox{E}\kern-.125emX}}
\begin{document}

\newtheorem{definition}{Definition}
\newtheorem{lemma}{Lemma}
\newtheorem{proof}{Proof}

\title{DATA-WA: Demand-based Adaptive Task Assignment with Dynamic Worker Availability Windows }


\author{
\IEEEauthorblockN{Jinwen Chen}
\IEEEauthorblockA{\textit{University of Electronic Science } \\
\textit{and Technology of China}\\
Chengdu, China \\
202422081316@std.uestc.edu.cn}
\and
\IEEEauthorblockN{Jiannan Guo}
\IEEEauthorblockA{\textit{China Mobile (Suzhou)}\\
\textit{Software Technology Co., Ltd.} \\
Suzhou, China \\
guojiannan@cmss.chinamobile.com}
\and
\IEEEauthorblockN{Dazhuo Qiu}
\IEEEauthorblockA{
\textit{Department of Computer Science} \\
\textit{Aalborg University}\\
Aalborg, Denmark \\
dazhuoq@cs.aau.dk}\\
\and
\IEEEauthorblockN{Yawen Li}
\IEEEauthorblockA{
\textit{School of Economics and Management} \\
\textit{Beijing University of Posts and Telecommunications}\\
Beijing, China \\
warmly0716@bupt.edu.cn}\\
\and
\IEEEauthorblockN{Yan Zhao}
\IEEEauthorblockA{
\textit{Department of Computer Science} \\
\textit{Aalborg University}\\
Aalborg, Denmark \\
yanz@cs.aau.dk}
\and
\IEEEauthorblockN{Kai Zheng}
\IEEEauthorblockA{\textit{University of Electronic Science } \\
\textit{and Technology of China}\\
Chengdu, China \\
zhengkai@uestc.edu.cn}
}

\author{Jinwen~Chen\textsuperscript{1}, Jiannan~Guo\textsuperscript{2}, Dazhuo~Qiu\textsuperscript{3}, 
Yawen~Li\textsuperscript{4},
Guanhua~Ye\textsuperscript{4},
Yan~Zhao\textsuperscript{1,\Letter}, Kai~Zheng\textsuperscript{1,\Letter}
\\
\textsuperscript{1}University of Electronic Science and Technology of China, Chengdu, China\\
\textsuperscript{2}China Mobile (Suzhou)
Software Technology Co., Ltd.,
Suzhou, China\\
\textsuperscript{3}Aalborg University, Aalborg, Denmark\\
\textsuperscript{4}
Beijing University of Posts and Telecommunications,
Beijing, China\\
\thanks{\textsuperscript{\Letter} Corresponding authors: Yan Zhao and Kai Zheng. Yan Zhao is with Shenzhen Institute for Advanced Study, University of Electronic Science and Technology of China. Kai Zheng is with Yangtze Delta Region Institute (Quzhou), and School of Computer Science and Engineering, University of Electronic Science and Technology of China.}
jinwenc@std.uestc.edu.cn, guojiannan@cmss.chinamobile.com, dazhuoq@cs.aau.dk,\\ warmly0716@bupt.edu.cn, g.ye@bupt.edu.cn, yanz@cs.aau.dk, zhengkai@uestc.edu.cn\\
}

\maketitle

\begin{abstract}
With the rapid advancement of mobile networks and the widespread use of mobile devices, spatial crowdsourcing, which involves assigning location-based tasks to mobile workers, has gained significant attention. However, most existing research focuses on task assignment at the current moment, overlooking the fluctuating demand and supply between tasks and workers over time. To address this issue, we introduce an adaptive task assignment problem, which aims to maximize the number of assigned tasks by dynamically adjusting task assignments in response to changing demand and supply. We develop a spatial crowdsourcing framework, namely demand-based adaptive task assignment with dynamic worker availability windows, which consists of two components including task demand prediction and task assignment. In the first component, we construct a graph adjacency matrix representing the demand dependency relationships in different regions and employ a multivariate time series learning approach to predict future task demands. In the task assignment component, we adjust tasks to workers based on these predictions, worker availability windows, and the current task assignments, where each worker has an availability window that indicates the time periods they are available for task assignments. To reduce the search space of task assignments and be efficient, we propose a worker dependency separation approach based on graph partition and a task value function with reinforcement learning. Experiments on real data demonstrate that our proposals are both effective and efficient.

\end{abstract}

\begin{IEEEkeywords}
task assignment, spatial crowdsourcing, task demand
\end{IEEEkeywords}

\section{Introduction}\label{sec:intro}
Along with the widespread  availability of GPS-enabled networked devices, e.g., smartphones,  Spatial Crowdsourcing (SC) has gained significant  attention in both academia and industry~\cite{tong2020spatial, zheng2022privacy, yao2023non, li2023acta, zhao2023preference, zhao2020predictive, Peng2023}. SC involves outsourcing location-based tasks (such as picking up passengers or delivering food and parcels) to individuals (i.e., crowd workers) through SC platforms like Didi and Uber Eats. This process is called \emph{task assignment}.

In a typical SC scenario, tasks must be assigned to crowd workers who are physically present in or near specific locations. The interaction between task requirements and worker availability is reflected in the demand and supply dynamics, describing how task availability and worker availability affect each other. When there are more tasks available than workers to complete them, there is high demand for workers, and vice versa. This interaction affects how tasks are assigned and completed in an SC platform. 
For instance, in ride-hailing services, a surge in passenger demand in a specific area results in high task demand, often causing a shortage of available drivers. To address this issue, platforms must utilize real-time data to increase the number of drivers nearby in the area. Conversely, during low-demand periods, the platform reallocates drivers to other areas, thereby reducing oversupply. This real-time feedback mechanism optimizes resource allocation and improves service efficiency. Similarly, in food delivery services, peak hours like lunch and dinner times cause a spike in orders and high task demand. During off-peak hours, fewer orders lead to an oversupply of drivers, with some going offline. This interaction shows how manage supply-demand imbalances in real time to ensure timely deliveries and operational efficiency.
Traditional task assignment methods~\cite{yao2023non, li2023acta, zhao2023preference,zhao2022Profit,zhao2021coalition,li2020group,li2020consensus} that focus on task assignment at the current moment without considering future demand dynamics often struggle to deal with the rapid and unpredictable changes in task demand and worker availability, leading to inefficiencies and suboptimal task assignments over the long term. 
It is important to accurately predict future task demands and consider worker availability to optimize current assignments. However, it is non-trivial to accurately predict the demands due to uncertain and dynamic spatio-temporal distributions, e.g., tasks might be dispersed over large and possibly uneven geographical areas.

Some studies consider task and worker predictions~\cite{9816080, Peng2023,zhao2020predictive,Wang2021Task,li2021preference}. For example, Wei et al.~\cite{9816080} propose a location-and-preference joint prediction model to predict workers' locations and preferences jointly at each time instance. Peng et al.~\cite{Peng2023} introduce a spatio-temporal prediction strategy that combines a gated recurrent unit and a variational autoencoder for crowdsourcing task prediction.
However, they ignore  the dynamics of task demands, which is crucial for accurately predicting the spatio-temporal distribution of tasks. 
The closest related research to ours is the work~\cite{zhao2020predictive}, which proposes a Prediction-based Task Assignment (PTA) approach that hybrids different learning models to predict the locations and routes of future workers and employs a graph embedding approach to estimate the distribution of future tasks. 
Nevertheless, it differs from our work in terms of the problem setting and assignment pattern. Specifically, PTA aims to maximize the number of assigned tasks by 
assigning a fixed task sequence to each worker, while our work focuses on maximizing the number of assigned tasks by assigning a dynamic task sequence to each worker based on the dynamics of task demands and the availability of workers. 
In SC, tasks and workers are continuously changing and moving, necessitating real-time updates and processing to ensure the optimal assignment.

This paper investigates the problem of Adaptive Task Assignment (ATA) in SC by focusing on demand dynamics and worker availability. Fig.~\ref{fig:example} illustrates a running example of the ATA problem with three workers denoted as $\{ w_1, w_2, w_3\}$, and nine tasks denoted as $\{s_1, \dots ,s_9 \}$.
Each worker can only perform tasks within a reachable distance of 1.2 units. 
In addition, each spatial task, published and expired at different time instances, is labeled with its location where it will be performed only once.
The straightforward approach, known as Fixed Task Assignment (FTA) algorithm, is to assign each worker a fixed sequence of tasks to be completed in order while satisfying spatial-temporal constraints.
In our example, we assign the task sequence $(s_1, s_3)$ to $w_1$ and $(s_2, s_4)$ to $w_2$, achieving the maximal number of assigned tasks at time instance 1. Similarly, in the time instance 4, we assign task $s_7$ to $w_3$.
However, the remaining tasks are rejected because no worker can reach the task locations after completing their assigned task sequences. Consequently, the total number of assigned tasks is 2 + 2 + 1 = 5.

\begin{figure}[htbp]
\vspace{-0.4cm}
    \centering
    {\includegraphics[width = 0.5\textwidth ]{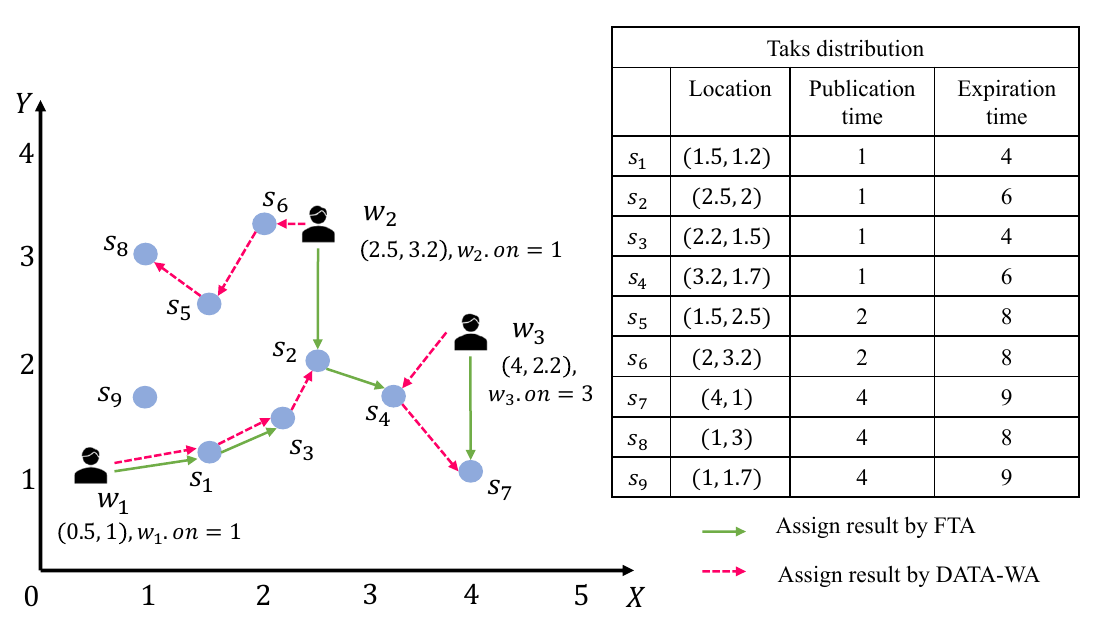}}
    \vspace{-0.7cm}
    \caption{Running Example}
    \label{fig:example}
    \vspace{-0.1cm}
\end{figure}

We show that the ATA problem is NP-hard (see Lemma~\ref{lem:np}). 
To solve ATA, we propose an SC framework, namely \textbf{\underline D}emand-based \textbf{\underline A}daptive \textbf{\underline T}ask \textbf{\underline A}ssignment with dynamic \textbf{\underline W}orker \textbf{\underline A}vailability windows (DATA-WA), which adjusts task assignment based on real-time and predicted task demand dynamics as well as dynamic worker availability. For task demand prediction, we employ a multivariate time series learning approach, Dynamic Dependency-based Graph Neural Network, to predict  future task demands across different regions. To the best of our knowledge, our approach is the first to consider the dependency relationships between task demands in different regions.
For task assignment, we use a Worker Dependency Separation approach based on a graph partition technique. This approach segregates workers into independent clusters arranged in a tree structure based on their locations and availability windows, where worker availability windows refer to the specific time periods during which workers are available to perform tasks. These windows can vary in duration and may include specific start and end times. 
In the constructed tree structure,
workers in sibling nodes are independent, thus reducing the search space. During the search process, we utilize a trained Task Value Function to select the optimal task sequence for workers and adaptively adjust current task assignments, minimizing the need for multiple backtracking processes.
The DATA-WA framework can handle large volumes of data efficiently, especially in urban areas with high densities of tasks. Fig.~\ref{fig:example} illustrates the task assignment results by applying DATA-WA, which assigns eight tasks.

Our contributions can be summarized as follows:

1) We identify and study in depth an Adaptive Task Assignment (ATA) problem, considering task demand dynamics and worker availability in the context of SC.

2) We design a multivariate time series learning method, called Dynamic Dependency-based Graph Neural Network, to capture demand dependencies among different regions on future task demand prediction.

3) We design a demand-based adaptive task assignment method considering dynamic worker availability windows to assign tasks.

4) Experimental results demonstrate that our proposed approaches are both effective and efficient when applied to real datasets.

The remainder of this paper is organized as follows. The preliminary concepts and framework overview are introduced in Section~\ref{II}. We then present the task demand prediction and task assignment methodology in Section~\ref{III} and Section~\ref{IV}, respectively, followed by the experimental results in Section~\ref{V}. Section~\ref{VI} surveys the related work, and Section~\ref{VII} concludes this paper.

\section{Problem Statement}
\label{II}
We proceed to present necessary preliminaries, define the problem addressed and then give an overview of our framework. Table~\ref{tab:notation} lists the major notations used throughout the paper.

\begin{table}[htbp]
    \centering
    \caption{SUMMARY OF NOTATIONS}
    \vspace{-0.2cm}
    \begin{tabular}{c|c}
    \hline
         Notation& Definition \\ \hline
         $s$&Spatial task\\ \hline
         $s.l$& Location of spatial task $s$\\ \hline
         $s.p$& Publication time of spatial task $s$ \\ \hline
         $s.e$& Expiration time of spatial task $s$\\ \hline
         $w$& Available worker \\ \hline
         $w.l$& Location of worker $w$\\ \hline
         $w.d$& Reachable distance of worker $w$\\ \hline
         $w.\mathit{on}$& Online time of worker $w$\\ \hline
         $w.\mathit{off}$& Departure time of worker $w$\\ \hline
         $S$&A task set \\ \hline
         $R(S)$&A task sequence on tasks in $S$ \\ \hline
         $\mathit{VR}(w)$&A valid scheduled task sequence of worker $w$\\ \hline
         $T_w$& Availability window of worker $w$\\ \hline
         $\Delta T$ & Time interval \\ \hline
         $t(l)$& Arrival time of location $l$\\ \hline
         $c(a,b)$& Travel time from $a$ to $b$ \\ \hline        
         $td(a,b)$& Travel distance from $a$ to $b$ \\ \hline
         $A$& A spatial task assignment\\ \hline
         $\mathbb{A}$& A spatial task assignment set\\ \hline
          
    \end{tabular}
    
    \label{tab:notation}
    \vspace{-0.5cm}
\end{table}

\subsection{Preliminary}

\begin{definition}[Task]
    A Task, denoted by $s = (l,p,e)$, has a location $s.l$, a publication time $s.p$, and
    an expiration time $s.e$. 
\end{definition}

\begin{definition}[Worker]
A worker can be in an either online or offline 
mode. A worker is considered offline when unable to perform tasks, and online when ready to accept tasks.
An online worker, denoted by $w = (l,d,\mathit{on},\mathit{off})$, consists of a location $w.l$, a reachable distance $w.d$, an online time $w.\mathit{on}$ and an offline time $w.\mathit{off}$.
\end{definition}

\begin{definition}[Task Sequence]
Given an online worker $w$ and a set of assigned tasks $S_w$, 
a task sequence on $S_w$, denoted as $R(S_w)$, represents the order in which $w$ performs the tasks in $S_w$. The arrival time of $w$ at the location of task $s_i\in S_w$ can be calculated in Eq.~\ref{eq:arrTime}.
    \begin{equation}\label{eq:arrTime}
t_{R,w}\left(s_i.l\right)= \begin{cases}t_{R,w}\left(s_{i-1}.l\right)+c\left(s_{i-1}.l, s_i.l\right) & i > 1 \\ t_{\mathit{now }}+c\left(w.l, s_i.l\right) & i=1,\end{cases}
\end{equation}
    where $t_{R,w}\left(s_i.l\right)$ denoted the arrival time of $w$ at $s_i.l$, $c(a, b)$ denotes the travel time from location $a$ to location $b$, $t_{now}$ is the current time, and $w.l$ denotes the current location of $w$, from which $w$ begins to accept the task assignment.
\end{definition}

\begin{definition} [Valid Task Sequence]
\label{VR}
A task sequence $R(S_w)$ is called a valid task sequence for a worker $w$, denoted as $\mathit{VR}(S_w)$, if it satisfies the following constraints: 

\begin{enumerate}[i.]
        \item all the tasks in this sequence can be completed before their expiration times, i.e., $t_{R,w}(s_i.l) < s_i.e$, and 
        \item all the tasks in this sequence can be completed before the offtime of worker $w$, i.e., $t_{R,w}(s_i.l) < w.\mathit{off}$, and
        \item all the tasks in this sequence are located in the reachable range of $w$, i.e., $\mathit{td}(w.l, s_i.l) < w.d$, where $\mathit{td}(a, b)$ is the travel  distance between location $a$ and $b$. 
    \end{enumerate}
\end{definition}

\begin{definition}[Spatial Task Assignment]Given a set of workers $W$ and a set of tasks $S$, a spatial task assignment, denoted by $A$, consists of a set of $(w, \mathit{VR}(S_w))$ pairs.
\end{definition}

In this work, we follow the single task assignment mode~\cite{kazemi2012geocrowd}, where each task can  be completed
by only one worker, and assume that each worker can perform at most one task at a time, which is practical.  
Let $A.S$ denote the set of tasks that are assigned to all workers, i.e., $A.S = \cup_{w \in W}\mathit{VR}(S_w)$, and $\mathbb{A}$ denote all possible ways of assignments. Our problem investigated in our paper can be formally stated as follows.

\textbf{Problem Statement.} Given a set of tasks $S$ and a set of workers $W$, 
our Adaptive Task Assignment (ATA) problem aims to find the global optimal assignment $A_{opt}$, such that $\forall A_i \in \mathbb{A}, |A_i.S|\le |A_{opt}.S|$

\begin{lemma}
\label{lem:np}
The ATA problem is NP-hard.
\end{lemma}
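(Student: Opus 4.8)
The plan is to establish NP-hardness by a polynomial-time reduction from a well-known NP-hard problem. The natural candidate is the classical scheduling / orienteering style problem, but the cleanest choice here is to reduce from an instance that already lives in a spatial-temporal setting. I would reduce from the \textbf{Traveling Salesman Problem} (or, more precisely, from its decision variant embedded in a single-worker routing question), since the valid task sequence $\mathit{VR}(S_w)$ for one worker is exactly a time-constrained route visiting task locations --- deciding whether a worker can serve $k$ tasks before their expiration times and before $w.\mathit{off}$ subsumes asking whether a Hamiltonian-type route of bounded length exists. An even more direct route is a reduction from a known hard SC task-assignment problem in the literature (many prior SC papers prove NP-hardness by reduction from the $0$-$1$ knapsack problem or from the problem of \emph{maximizing the number of on-time deliveries}, which is itself NP-hard); I would cite the most convenient such source and adapt its gadget.

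Concretely, the steps I would carry out are: (1) pick the source problem --- I would use the NP-complete problem of deciding, for a single vehicle with a deadline, whether it can visit at least $k$ of $n$ given points (this is the decision version of the \emph{Prize-Collecting / Orienteering} problem, known NP-hard); (2) given such an instance, construct an ATA instance with a single worker $w$ whose online interval $[w.\mathit{on}, w.\mathit{off}]$ equals the global deadline, whose reachable distance $w.d$ is set large enough not to bind (or set to the deadline-implied bound), and place one spatial task $s_i$ at each point with publication time $0$ and a common expiration time $s_i.e = w.\mathit{off}$; set travel times $c(\cdot,\cdot)$ equal to the given metric; (3) observe that any valid task sequence for $w$ is precisely a feasible time-bounded route in the source instance, so the maximum $|A.S|$ over all assignments equals the maximum number of points visitable within the deadline; (4) conclude that an algorithm solving ATA optimally would decide the source decision problem (``is the optimum $\ge k$?''), and since the construction is clearly polynomial, ATA is NP-hard.

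The main obstacle --- and the part that needs care rather than cleverness --- is making sure the ATA constraints in Definition~\ref{VR} (expiration times, worker offtime, reachable-distance constraint) can be simultaneously tuned so that feasibility of a task sequence in the constructed instance is \emph{equivalent} to, not merely implied by, feasibility in the source problem. In particular the reachable-distance constraint $\mathit{td}(w.l, s_i.l) < w.d$ is a per-task radius condition from the worker's \emph{initial} location, not a cumulative path-length condition, so I must either choose $w.d$ large enough that it is vacuous (while still keeping the metric finite and the reduction polynomial) or, if the source problem has a radius flavor, exploit it directly. I would also need to confirm the timing recurrence in Eq.~\ref{eq:arrTime} with $t_{\mathit{now}}=0$ reproduces exactly the accumulated travel time along the route. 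Once the gadget's parameters are pinned down, the equivalence argument and the polynomial-time claim are routine; I would present the reduction, then a short two-direction argument that optimal ATA value equals the source optimum, and close.
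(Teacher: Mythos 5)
Your reduction is correct, but it is a genuinely different route from the paper's. The paper proves hardness by a reduction from Maximum Coverage: it treats each candidate valid task sequence abstractly as a set $\mathit{VR}_i$ over the task universe $S$, lets the $k$ workers play the role of the budget, and argues that maximizing $\left|\cup_{\mathit{VR}_i\in\mathbb{VR}'}\mathit{VR}_i\right|$ with $|\mathbb{VR}'|\le k$ is exactly MC; the hardness there comes from the multi-worker selection among overlapping candidate sequences, and the spatio-temporal structure (locations, travel times, deadlines) is never instantiated explicitly. You instead ground the hardness in the routing structure itself: a single-worker instance in which every valid task sequence under Definition~\ref{VR} and the arrival-time recurrence of Eq.~\ref{eq:arrTime} (with $t_{\mathit{now}}=0$) is precisely a deadline-bounded route, so the optimal $|A.S|$ equals the orienteering optimum (and with $k=n$ this already contains bounded-length Hamiltonian path). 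Both directions of reduction are stated correctly, and the technical caveats you flag — making the per-task radius constraint $\mathit{td}(w.l,s_i.l)<w.d$ vacuous by choosing $w.d$ large, and noting that since arrival times are nondecreasing the per-task deadline binds only at the last task (strict versus non-strict inequalities being an $\epsilon$ matter) — are exactly the right ones and are easily discharged. What each approach buys: yours is self-contained and shows that even one worker with a common deadline is hard, i.e., the hardness is intrinsic to time-constrained routing; the paper's MC reduction highlights the combinatorial worker-selection/coverage aspect, but it leaves implicit the (nontrivial) claim that an arbitrary set system can be realized as the family of valid task sequences of a genuine spatial instance, so your explicit gadget is, if anything, the more rigorous of the two.
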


\begin{proof}
\label{prf:np}
The hardness of the ATA problem can be proved by constructing a polynomial time reduction from the Maximum Coverage (MC) problem and showing that an solution to ATA can be reduced to a solution to MC. 

In the MC problem,
we are given a collection of sets $\mathbb{B}=$ $\left\{B_1, B_2, \ldots, B_n\right\}$ over a set of objects $O$, where $B_i \subseteq O$, and a positive integer $k$. The MC problem is to find a subset $\mathbb{B}^{\prime} \subseteq \mathbb{B}$ such that $\left|\mathbb{B}^{\prime}\right| \leq k$ and the number of covered elements in $\mathbb{B}^{\prime}$ (i.e., $\left.\left|\cup_{B_i \in \mathbb{B}^{\prime}} B_i\right|\right)$ is maximized.

Consider the following instance of our ATA problem.
We are given a collection of task sets $\mathbb{VR} = \left\{\mathit{VR}_1, \mathit{VR}_2, \ldots\mathit{VR}_n\right\}$ over a set of tasks $S$, where $\mathit{VR}_i \subseteq S$. We are also give a set of $k$ workers, $W=\left\{w_1, w_2, \ldots, w_k\right\}$. Our ATA problem is to find a subset $\mathbb{VR}^{\prime} \subseteq \mathbb{VR}$ such that $\left|\mathbb{VR}^{\prime}\right| \leq k$ and the number of assigned tasks, i.e., $\left|\cup_{\mathit{VR_i} \in \mathbb{VR}^{\prime}} \mathit{VR_i}\right| $, is maximized.

Given an instance of the MC problem with a collection of sets $\mathbb{B}=$ $\left\{B_1, B_2, \ldots, B_n\right\}$ and 
a positive integer $k$, we construct an instance of the ATA problem, where $\mathbb{VR}$, $\mathit{VR}_i$, $S$,  and $\mathbb{VR}^{\prime}$ correspond to $\mathbb{B}$, $\mathit{B}_i$, $O$,  and $\mathbb{B}^{\prime}$ in the original MC instance, respectively.
Therefore, the instance of the ATA problem can be reduced from the instance of MC.

A solution to the ATA problem would select a subset of workers $W^{\prime}\in W$ ($|W^{\prime}|\le |W|$) and assign them tasks $\mathbb{VR}^{\prime} = \left\{\mathit{VR}^{w_1}, \mathit{VR}^{w_2}, \ldots, \mathit{VR}^{w_{|W^{\prime}|}}\right\}$, where the task assignment maximizes the total assigned tasks.
This is directly equivalent to finding a subset  $\mathbb{B}^{\prime}\subseteq \mathbb{B}$, with $\left|\mathbb{B}^{\prime}\right| \leq k$, that maximizes the number of covered elements.

If we can solve the ATA problem instance efficiently (i.e.,
in polynominal time), we can solve a MC problem
by transforming it to the corresponding ATA problem instance
and then solve it efficiently. This contradicts the fact that the MC problem is NP-hard~\cite{mc}, and so there cannot be an efficient solution to the ATA problem instance that is then NP-hard. Since the ATA problem instance is NP-hard, the ATA problem is also NP-hard.

\end{proof}

\begin{figure*}[htbp]
\centerline
{\includegraphics[width = 0.95\textwidth ]{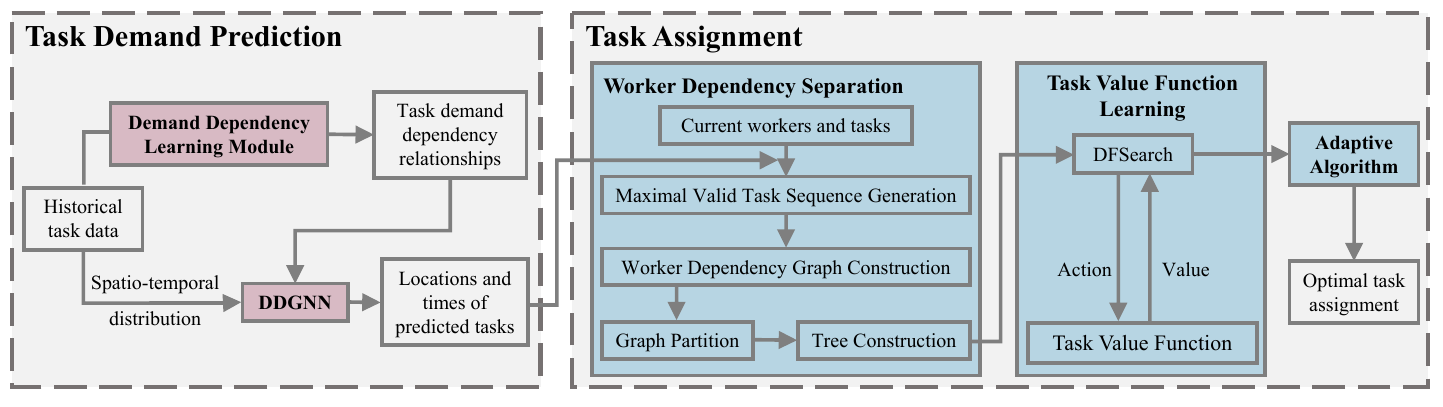}}
\vspace{-0.5cm} 
\caption{Framework Overview}
\label{Framework}
\vspace{-0.5cm}
\end{figure*}

\subsection{Framework Overview}
We propose a framework, namely \textbf{\underline D}emand-based \textbf{\underline A}daptive \textbf{\underline T}ask \textbf{\underline A}ssignment with dynamic \textbf{\underline W}orker \textbf{\underline A}vailability windows (DATA-WA), to adaptively assign tasks to workers based on demand dynamics.
We first give an overview of the framework and then provide specifics on each component in the following sections.

The innovation of our proposed framework 
lies in its consideration of the future task's demand dynamics. 
It assigns a dynamic task sequence to each worker, updating and processing this sequence in real-time to ensure optimal assignment.
The main challenges involve accurately predicting the spatio-temporal demand of future tasks and strategically planning assignments for both current and predicted tasks for current workers.
To address these challenges, we introduce a novel SC framework with two key components: task demand prediction and task assignment, as illustrated in Fig.~\ref{Framework}.

The first component focuses on predicting future task demands using historical data. We consider the dependency relationships of task demands across different regions and employ a multivariate time series learning approach to forecast future demands.
To model these dependency relationships, we use a demand dependency learning module to estimate a graph adjacency matrix based on historical data. We then develop a \textbf{\underline D}ynamic \textbf{\underline D}ependency-based \textbf{\underline G}raph \textbf{\underline N}eural \textbf{\underline N}etwork (DDGNN) to capture both spatial and temporal dependencies, aiding in predicting the locations and times of tasks.

The second component involves planning a suitable task sequence for each worker in real-time to achieve global optimal task assignments.
First, in the Worker Dependency Separation phase, we compute the Maximal Valid Task Sequence for all workers and construct a worker dependency graph. We then apply a graph partitioning technique to divide workers into independent clusters, 
which are then organized into a tree structure.
Next, we perform a depth-first search algorithm to traverse the tree and assign the optimal task sequences for each worker. This data is then used to train a Task Value Function (TVF). 
Finally, during the adaptive algorithm phase, we use the trained TVF to select the optimal task sequence for all workers and adaptively adjust their task assignments based on changes in supply and demand. This approach minimizes the need for multiple backtracking processes, ensuring a global optimal task assignment. 
\section{Task Demand Prediction}
\label{III}
Predicting future dynamic task demands is crucial for dynamically adjusting the task sequence for workers to achieve optimal task assignments. We approach task prediction using a grid-based method by partitioning the study area into disjoint and uniform grids. Each grid represents a specific type of area, such as schools, shopping malls, or food streets, thereby reflecting real-world scenarios.
The historical task data from multiple grid cells can be treated as a multivariate time series, which will be detailed in the following preliminaries. Predicting this multivariate time series is essentially equivalent to addressing the task demand prediction problem. To address this, we propose a two-module method to predict future task demands across different regions. 
As shown in Fig.~\ref{Framework}, the proposed method consists of a Demand Dependency Learning Module and a Dynamic Dependency-based Graph Neural Network (DDGNN), which work together to predict task demands based on the spatio-temporal distribution of tasks. 
In the following sections, we first give some preliminaries and then detail the two modules. 

\subsection{Preliminaries}

\textbf{Task Multivariate Time Series.} Unlike models such as LSTM~\cite{lstm} that only consider one variable, multivariate time series incorporates multiple variables at each time step~\cite{cui2021metro, deng2025million, chen2021daemon}. Deriving from the multivariate approach, we propose the Task Multivariate Time Series for multiple grids. In each grid cell $i\in \{1,\ldots,M\}$, we define a multivariate time series: $\boldsymbol{C}_i = \langle \boldsymbol{c}^{t_0}_i, \boldsymbol{c}^{t_0+k\Delta T}_i,\ldots,\boldsymbol{c}^{t_0+(P-1)k\Delta T}_i\rangle$ (we use bold letters $\boldsymbol{C}$ and $\boldsymbol{c}$ to denotes vectors). It consists of a sequence of vectors $\boldsymbol{c}$ in increasing time order (starting from time $t_0$), with each vector having $k$ (user-specified, and $k>1$) dimensions. 
Each dimension corresponds to the task occurrence in a specific time interval $\Delta T$, and a binary value (1 for yes, 0 for no) is assigned to each dimension based on whether tasks occur during this time interval. 
For example, $\boldsymbol{c}^{t_0+(P-1)k\Delta T}_i$ has $k$ dimensions, which shows the task occurrence during the time interval from $t_0+(P-1)k\Delta T$ to $t_0+Pk\Delta T$, with each dimension denoting the task occurrence in $\Delta T$.
Given a task set $S$ that occur in cell $i$ during time interval $[t_0, t_0+Pk\Delta T]$ ($t_0+Pk\Delta T$ is the right time boundary of vector $\boldsymbol{c}_i^{t_0+(P-1)k\Delta T}$), the binary value of $j$ dimension $(j \in {1,\ldots,k})$ of vector $\boldsymbol{c}_i^{t}[j]$ $(t \in \{t_0, t_0+k\Delta T, \ldots, t_0+(P-1)k\Delta T\})$ is defined in Eq.~\ref{eq:Grid1}. 
\begin{equation}\label{eq:Grid1}
\boldsymbol{c}_i^{t}[j]=
\begin{cases}
1   & \exists s \in S;  t+(j-1)\Delta T \leq s.p < t+j\Delta T \\
0   & otherwise\\
\end{cases}
\end{equation}
Then for each vector in the task multivariate time series, it covers $k\Delta T$ time intervals. Based on $P$ historical record vectors in $\boldsymbol{C}_i$, our goal is to predict the next vector during time interval from $t_0+Pk\Delta T$ to $t_0+(P+1)k\Delta T$, i.e., $\boldsymbol{c}_i^{t_0+Pk\Delta T}$. 

For example, as shown in Fig.~\ref{exampleC}, we set $k=3$, which indicates that we consider three time intervals $3\Delta T$ for each vector $\boldsymbol{c}$. 
Specifically, between $t_0$ and $t_0 + \Delta T$, there are tasks published in grid $i$, 
resulting in the first dimension of the first vector being $1$. Similarly, for the second dimension in the first vector, tasks are published between $t_0 + \Delta T$ and $t_0 + 2\Delta T$, 
so this value is also $1$.
However, in the third dimension of the first vector, no tasks are published, yielding a value of $0$. Therefore, the time series is represented as a vector $\boldsymbol{c} = \langle 1, 1, 0\rangle$, reflecting the presence of tasks across these three time intervals.

\begin{figure}[htbp]
\vspace{-0.4cm}
\centerline
{\includegraphics[width = 0.48\textwidth ]{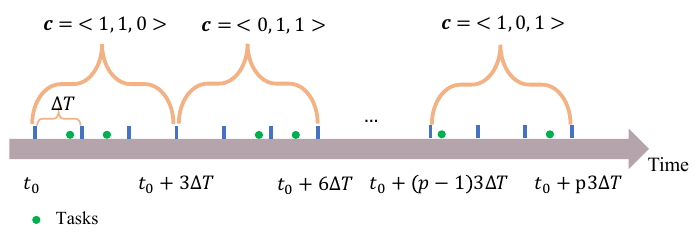}}
\vspace{-0.3cm}
\caption{Example of Task Multivariate Time Series ($k=3$)}
\label{exampleC}
\vspace{-0.3cm}
\end{figure}

\textbf{Grid Graph.}
    A grid graph is denoted as $G=(V, E)$, where $V$ is a set of nodes representing grid cells, and $E$ is a set of edges representing the task demand dependencies between these cells. An edge $e_{ij} = (v_i, v_j)$ exists if the task demands in grid cells $v_i$ and $v_j$ affect each other. Specifically, if an increase in task demands in one grid cell (e.g., a region in a city) leads to a change in task demands in another grid cell after some time, it indicates a strong dependency between these cells, and they are connected in the grid graph. For simplicity, we refer to grid graph as graph when the context is clear. 

\textbf{Graph Adjacency Matrix.}
    The adjacency Matrix at time instance $t$ is denoted by $\mathcal{A}^t \in \textbf{R}^{M \times M}$,
    where $M$ is the number of nodes (grid cells) in graph $G$. For any two nodes $v_i$ and $v_j$,  $\mathcal{A}^t_{ij}=1$ if $(v_i, v_j) \in E$; $\mathcal{A}^t_{ij}=0$ if $(v_i, v_j) \notin E$.
 




\textbf{Dilated Causal Convolution.}
Dilated Causal Convolution~\cite{wu2020connecting} effectively captures a node’s temporal trends by allowing an exponentially large receptive field as layer depth increases. It handles long-range sequences efficiently in a non-recursive manner. 
Specifically, given a sequence input $x = \{x_1,x_2,\ldots,x_J\}$ and a filter function $f(\cdot) \in \textbf{R}^K$ ($K$ is the dimension of filter that is set to $3$ in our work), the output of dilated causal convolution operation of $x$ with $f(\cdot)$ at step $j$ is represented as follows:
\begin{equation}
y_j = \sum_{i=0}^{K-1}{f(i)\cdot x_{j-i \cdot d}},
\end{equation}
where $d$ is the dilation factor that controls the skipping distance.



\subsection{Demand Dependency Learning Module.}
Tasks across different regions are interconnected, meaning that changes in task demands in one area can affect those in others. For instance, in a city, when university classes end, students often head to a nearby restaurant district, causing an initial surge in ride requests from the university. Later, when the students finish dining and socializing, they request rides home, leading to increased demand in the restaurant district. This example illustrates how rising demand in one region can subsequently impact another.

Graph-based methods for capturing relationships between different observations have become prevalent in multivariate time series prediction \cite{wu2020connecting, wu2019graph}. In this paper, we propose a dynamic time-based adjacency matrix, $\mathcal{A}^t$, to represent the dependencies of task demands across different regions at each time instance $t$. We develop a demand dependency learning module that learns the graph adjacency matrix through end-to-end learning using stochastic gradient descent.


First, we initialize two node embedding features (i.e., $\mathbb{M}_1$ and $\mathbb{M}_2$) with neural networks from historical task data $\mathbb{C}^t = \{\boldsymbol{c}_1^t, \boldsymbol{c}_{2}^t,..., \boldsymbol{c}_M^t\}$ at time instance $t$, where $\boldsymbol{c}_i^t$ denotes the encoding feature of $i^{th}$ cell at time instance $t$ and $M$ is the number of grid cells, as illustrated below:
\begin{equation}
\label{eq:M1}
\mathbb{M}_1 = F_{\theta_1}(\mathbb{C}^{t})
\end{equation}
\begin{equation}
\label{eq:M2}
\mathbb{M}_2 = F_{\theta_2}(\mathbb{C}^{t}),
\end{equation}
where $F_{\theta}$ is a neural network (e.g., fully connected layer) with parameters $\theta$. 

Second, we designate $\mathbb{M}_1$ as the source node embedding and $\mathbb{M}_2$ as the target node embedding. By multiplying $\mathbb{M}_1$ and $\mathbb{M}_2$, we calculate the spatial dependency weights between the source nodes and the target nodes in Eq.~\ref{eq:A}.
\begin{equation}
\label{eq:A}
\mathcal{A}^{t} = SoftMax(tanh(\mathbb{M}_1\mathbb{M}_2^T+\mathbb{M}_2\mathbb{M}_1^T)),
\end{equation}
where the tanh activation function maps the results to a range between $-1$ and $1$, and the SoftMax activation function is applied to normalize the adjacency matrix.
\begin{figure}[htbp]
\vspace{-0.5cm} 
\centerline
{\includegraphics[width = 0.35\textwidth ]{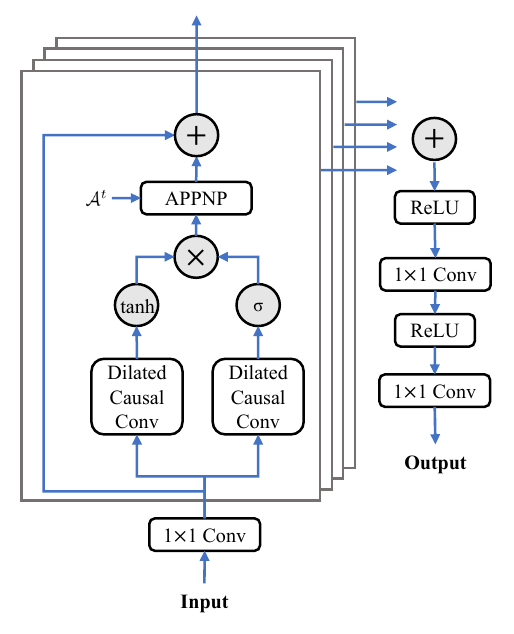}}
\vspace{-0.5cm} 
\caption{DDGNN Overview}
\label{fig2}
\vspace{-0.5cm} 
\end{figure}

\subsection{Dynamic Dependency Graph Neural Network}

Dilated causal convolution has proven effective in capturing temporal trends~\cite{wu2020connecting, quan2023detection}.
In our proposed Dynamic Dependency-based Graph Neural Network (DDGNN) model, we employ dilated causal convolution to identify the temporal dependencies. Then, we utilize a graph propagation approach to integrate a node's information with that of its neighbors, thereby addressing task demand dependencies within the graph, as illustrated in Fig~\ref{fig2}.

Gating mechanisms are crucial in multivariate time series forecasting as they regulate the information flow within the dilated causal convolution network. We utilize dilated causal convolution to identify the temporal dependencies in cell $i$, as represented by Eq.~\ref{eq:gating}.
\begin{equation} 
\label{eq:gating}
Z_i = tanh(\theta_1\boldsymbol{C}_i, b_1) \odot \sigma(\theta_2\boldsymbol{C}_i, b_2),
\end{equation}
where $\theta_1$, $\theta_2$, $b_1$, and $b_2$ are dilated causal convolution parameters, $\odot$ is the element-wise product, $tanh( \cdot )$ is the activation function, and $\sigma(\cdot)$ is the sigmoid function that determines the proportion of information flow to the next layer.

Next, we use Approximate Personalized Propagation of Neural Predictions (APPNP)~\cite{appnp} as the propagation layer to extract a node's feature by aggregating and transforming its neighborhood information. The propagation step is defined as follows:
\begin{equation}
\mathbb{Z}^{(h+1)}_t = \alpha \mathbb{Z}^{(0)}_t + (1-\alpha)\hat{\mathcal{A}}^t\mathbb{Z}^{(h)}_t
\end{equation}
\begin{equation}
\mathbb{Z}^{(H)}_t = ReLU(\alpha \mathbb{Z}^{(0)}_t + (1-\alpha)\hat{\mathcal{A}}^t\mathbb{Z}^{(h-1)}_t),
\end{equation}
where $\mathbb{Z}^{(0)}_t \in \textbf{R}^{M \times k}$ represents the input hidden states at time instance $t$ output by the previous layer, and $\mathbb{Z}^{(H)}_t$ are the output feature by propagation. The $\hat{\mathcal{A}}^t = \hat{D}^{-1/2}(\mathcal{A}^t+I)\hat{D}^{-1/2}$ is the normalized adjacency matrix, where $\hat{D}$ is the diagonal degree matrix $\hat{D}_{ii} = 1 + \sum_j{\mathcal{A}^t_{ij}}$, $\alpha$ is a hyper-parameter for controlling the restart probability, and $H$ defines the number of power iteration steps and $h \in [0, H - 2]$.

After applying the DDGNN approach, we obtain the vector $\boldsymbol{c}_i^{t_0+pk\Delta T}$ for the $i^{th}$ cell.
If $\boldsymbol{c}^{t_0+pk\Delta T}_i[j]$ exceeds a given threshold (i.e, 0.85 in our experiments), we predict that a task will be published in the $i^{th}$ cell during the time interval from $t_0+pk\Delta T$ to $t_0+(p+1)k\Delta T$. Both predicted and current tasks are considered in the next task assignment process.

\section{Task Assignment}
\label{IV}
In this section, we introduce a task assignment component designed to address the ATA problem by planning a dynamically valid task sequence for all available workers. This component considers current and predicted tasks alongside worker availability windows to achieve optimal task assignments. Each worker's availability window reflects the time periods they are available for task assignments and can change dynamically due to factors such as breaks, shifts, or unforeseen circumstances.

We first present the Worker Dependency Separation (WDS) approach, which uses tree-based graph partitioning to divide workers into independent clusters and construct a hierarchical tree structure. A depth-first search algorithm is then employed to gather optimal results. Additionally, we incorporate a reinforcement learning approach to train a Task Value Function (TVF) based on these optimal results. Finally, we propose an adaptive algorithm for task assignment that adjusts the allocation of tasks to workers in response to changes in demand and supply, ensuring the most efficient task distribution.

\subsection{Worker Dependency Separation}
The primary computational challenge lies in enumerating all possible valid task combinations for each worker, leading to an exponentially expanding search space as the number of workers and tasks increases. In practice, however, a worker typically shares tasks with only a limited number of other workers who have similar or intersecting travel routes. To address this issue, we first construct a worker dependency graph and apply a graph partitioning method. We then organize the workers within each subgraph into a tree structure, ensuring that workers in sibling nodes are independent of one another. 

\subsubsection{Maximal Valid Task Sequence Generation} In this section, we first find reachable tasks, based on which the maximal valid task sequences for each worker are generated.

\textbf{Finding Reachable Tasks.}
Due to the constraints of workers' reachable distance, offline time, and tasks' expiration time, each worker $w$ can only complete a subset of tasks in their availability window $T_w$ from the current assignment time to their offline time. 
The reachable task subset for a worker $w$, denoted by $RS_w$, should satisfy the following constraints: $\forall s \in RS_w$,
\begin{enumerate}[i.]
    \item the task can be completed before its expiration time, i.e., $c(w.l, s.l) \le s.e-t_{now}$ (where $c(a, b)$ denotes the travel time between location $a$ and location $b$, and $t_{now}$ is the current time) and 
    \item the task can be completed within $T_w$ time interval, i.e., $c(w.l, s.l) \le T_w$ and 
    \item all the tasks in this sequence are located in the reachable range of $w$, i.e., $td(w.l, s.l) \le w.d$, where $td(a, b)$ is the travel distance between location $a$ and $b$.
\end{enumerate}

\textbf{Find Maximal Valid Task Sequence.}
Given the reachable task set $RS_w$ for worker 
$w$, we can derive the valid task sequence set $\mathbb{VR}(\mathit{RS}_w)=\{\mathit{VR}(\mathit{RS}_w)\}$ (see Definition~\ref{VR}). 
When the context is clear, we use $\mathbb{VR}$ to denote $\mathbb{VR}(RS_w)$.
We define the set of Maximal Valid Task Sequences for worker $w$ as $Q_w = \{q_1, q_2, \dots, q_{|Q_w|}\}$, where $q \in \mathbb{VR}$, and it obtains the minimal cost among all sequences in 
$\mathbb{VR}$ that consist of the same set of elements. Specifically, assuming that the order of $q$ is $(s_1, s_2, ..., s_{|q|})$, then $\forall \mathit{VR'} \in \mathbb{VR}$, the arrival time of both sequence should follow Eq.~\ref{maxvr}.
\begin{equation}
t_{q,w}(s_{|q|}) \leq  t_{\mathit{VR'},w}(s_{|\mathit{VR'}|}),
\label{maxvr}
\end{equation}
where the sequences $q$ and $\mathit{VR'}$ contain the same set of elements in different orders.

\subsubsection{Worker Dependency Graph Construction}
Given a worker set $W$ and a task set $S$, we can construct a Worker Dependency Graph (WDG), $G(W, E)$, where each node represents a worker. An edge $e(u,v) \in E$ exists between nodes $u$ and $v$ if the corresponding workers are dependent on each other. Two workers are considered dependent if they share the same reachable tasks; otherwise, they are independent. The time complexity of WDG construction is $O(|W|^2 \cdot |RS|)$, where $|RS|$ is the average number of reachable tasks for each worker.

\subsubsection{Graph Partition}
In this part, we use the maximum cardinality search (MCS) algorithm~\cite{MCS} to iteratively find maximal cliques. MCS consists of the following two steps: 

\begin{enumerate}[i.]
    \item Given a worker dependency graph (WDG), add appropriate new edges to create a corresponding chordal graph. A chordal graph is a graph in which every cycle of four or more vertices has at least one chord, where a chord is an edge that connects two non-adjacent vertices in the cycle.
    \item Find all maximal cliques in the chordal graph.
\end{enumerate}

\subsubsection{Tree Construction}
In this step, our objective is to organize the groups of workers in a tree structure so that the workers in sibling nodes are independent of each other. This setup allows us to independently solve the optimal assignment sub-problem for each sibling node. To achieve this, we use the following Recursive Tree Construction (RTC)~\cite{zhao2019destination} algorithm.

\begin{enumerate}[i.]
    \item Try to remove the cliques $X_i \in X$ (generated by the graph partition step) from the WDG, and the graph will be separated into several components. Select the cliques $X'$ that result in the highest number of components. Consider $X'$ as the parent node for each output of the recursive procedure in the next step.
    \item Apply the MCS algorithm to each sub-graph by removing workers of $X'$ and recursively applying the algorithm to the output of the MCS algorithm.
    \item Return $X'$ as the root node of this sub-tree.
\end{enumerate}

Given the worker dependency graph WDG, we construct a tree structure from the RTC algorithm, denoted by $\Gamma$(with a set of nodes $\mathbb{N}_\Gamma = \{N_1, N_2, ..., N_{|\mathbb{N}_\Gamma|}\}$), that satisfies the following properties:
\begin{enumerate}[i.]
    \item $\cup_{i \in |\mathbb{N}_\Gamma|}N_i = W$; and
    \item workers in sibling nodes are independent of each other.
\end{enumerate}

The time complexity of RTC in the $i^{th}$ recursion is $O(|X^i|+|G^i| \cdot (|V^i|+|E^i|))$, where $|X^i|$ represents the size of cliques generated by the graph partition step, $G^i$ is the subgraph set by performing step 1 in the $i^{th}$ recursion, $|E^i|$ is the number of edges in the chordal graph obtained and $|V^i|$ is the number of nodes in that chordal graph.

\subsection{Task Value Function Learning}
Recursive search can often lead to significant computational complexity and instability. In Reinforcement Learning (RL), the value function evaluates the long-term value of a state-action pair, which is essential for guiding the agent in choosing the optimal action. To improve task assignment efficiency, we use a depth-first search method to gather training data and train a Task Value Function (TVF) based on this data.

In RL, the value function evaluates the long-term value of a state-action pair. It consists of three parts: state, action, and reward. In task assignment, the ``state" consists of the states of all remaining workers and tasks (i.e., locations and publication times).
The ``action" involves selecting a specific worker and determining a task sequence for them. 
The ``reward" represents the assigned number of tasks reflecting the effectiveness of the task assignment process. The state-action value function $\mathit{TVF}(st, at)$ represents the expected value of the cumulative reward that the agent may obtain in the future after performing action $at$ in state $st$:
\begin{equation}
\mathit{TVF}(st, at)=\mathbb{E}\left[V \mid ST=st, AT=at\right],
\end{equation}
where $V$ denotes the cumulative reward, and the expectation is taken over the distribution of possible future states and rewards.

We explain the depth-first search method, $\mathit{DFSearch}$, in Algorithm~\ref{dfs}. The algorithm starts by computing the remaining available workers $W_C$ of all nodes excluding $W_N$ in the tree (lines~\ref{dfs2}--\ref{dfs4}). If there are still workers to be probed, we will sequentially examine each available worker in $W_N$ (lines~\ref{dfs5}--\ref{dfs6}). We use depth-first-search to determine the optimal assignment number (line~\ref{dfs8}) and define the current state as $(W_N + W_C, S)$ and action as $(w, q)$, which corresponds to a RL process that appends current state, action, and reward (i.e., $st$, $at$, and $opt$) to training data $U$ (lines~\ref{dfs9}--\ref{dfs11}).
On the other hand, if all the workers have been considered, the algorithm will initiate the $\mathit{DFSearch}$ process on each child node of $N$ (lines~\ref{dfs15}--\ref{dfs16}).
\vspace{-0.2cm}
\begin{algorithm}
\small
    \label{dfs}
    \SetAlgoLined 
    \caption{DFSearch}
    \SetKwInOut{Input}{Input}
    \SetKwInOut{Output}{Output}
    \Input{A node $N$, a set of tasks $S$, A set of workers $W_{N}$} 
    \Output{$opt$} 
    $opt \gets 0; W_C\gets \emptyset; U \gets \emptyset;$  \label{dfs1}\\
     \For{each child node $N_i$ of N}{ \label{dfs2}
        $W_C + \gets W_{N_i}$; \label{dfs3}
        } \label{dfs4}
    \eIf{$W_N \ne \emptyset$}{ \label{dfs5}
        \For{each worker $ w \in W_N$}{ \label{dfs6}
            \For{each sequence $ q \in Q_w$}{ \label{dfs7}
            $opt \gets max\{\mathit{DFSearch}(N, S-q, W_N - w)+|q|, opt\}$; \label{dfs8}\\
            $st \gets (W_N + W_C, S)$; \label{dfs9}\\
            $at \gets(w, q)$; \label{dfs10}\\
            $U + \gets (st, at, opt)$; \label{dfs11}\\
            } \label{dfs12}
        } \label{dfs13}
    }{ \label{dfs14}
        \For{each child node $N_i$ of N}{ \label{dfs15}
        $opt + \gets \mathit{DFSearch}(N_i, S, W_{N_i})$; \label{dfs16}
        } \label{dfs17}
    } \label{dfs18}
    \KwRet $opt$; \label{dfs19}
\end{algorithm}
\vspace{-0.5cm}

During learning, we apply Q-learning~\cite{watkins1992q} updates on samples (or mini-batches) of experience $\left(st, at, opt\right)$, drawn uniformly at random from the stored samples generated by $\mathit{DFSearch}$. The Q-learning update uses the following loss function:
\begin{equation}
L\left(\theta\right)=\mathbb{E}_{\left(st, at, opt\right) \sim U}\left[\left(opt-\mathit{TVF}\left(st, at ; \theta\right)\right)^2\right],
\end{equation}
where $\theta$ are the parameters of the TVF.

The memory consumption for storing all state-action pairs in Algorithm~\ref{dfs} is $O(|U| \times (|W| + |S| + |RS|))$, where $|U|$ represents the size of the training data $U$, and $|RS|$ denotes the average number of reachable tasks for each worker.

We then use the trained TVF to solve the assignment problem, referred to as $\mathit{DFSearch\_TVF}$, as outlined in Algorithm~\ref{rl}. The inputs include the root node $N$ of the sub-tree to be traversed, the remaining unassigned task set $S$, and the remaining available workers $W_N$ at node $N$. The output is the assignment result $\mathit{SA}$ within the sub-tree rooted at node $N$.

The algorithm begins by computing the remaining available workers, $W_C$, for all nodes in the tree, excluding $W_N$ (lines~\ref{rl2}--\ref{rl4}). If there are still workers to be probed, we will examine the first available worker in $W_N$ (lines~\ref{rl5}--\ref{rl6}). We define the current state as $\{W_N + W_C, S\}$ and the action as $\{w, q\}$, corresponding to reinforcement learning (RL). The value function TVF is then used to calculate the optimal task sequence $q_\mathit{best}$, which maximizes the reward from the valid task sequence set $Q_w$ (lines~\ref{rl7}--\ref{rl8}). We then append the best assigned result to output and recursively call the $\mathit{DFSearch\_TVF}$ approach by passing in the updated remaining task set $S-q_\mathit{best}$ and worker set $W_N-w$ (lines~\ref{rl9}--\ref{rl10}). Conversely, if all the workers have been counted, the algorithm will initiate the $\mathit{DFSearch\_TVF}$ process on each child node of $N$ (lines~\ref{rl12}--\ref{rl14}).

\vspace{-0.2cm}
\begin{algorithm}
\small
    \label{rl}
    \SetAlgoLined 
    \caption{DFSearch\_TVF}
    \SetKwInOut{Input}{Input}
    \SetKwInOut{Output}{Output}

    \Input{A node $N$, a set of tasks $S$, A set of workers $W_{N}$}
    \Output{$\mathit{SA}$}
    $\mathit{SA} \gets \emptyset; W_C \gets \emptyset;$ \label{rl1}\\
     \For{each child node $N_i$ of N}{ \label{rl2}
        $W_C + \gets W_{N_i}$; \label{rl3}
        } \label{rl4}
    \eIf{$W_N \ne \emptyset$}{ \label{rl5}
        $w \gets W_N[0]$;  \label{rl6}\\
        $st \gets (W_N + W_C, S);$ \label{rl7}\\
        $q_{best} \gets argmax_{q \in Q_W}\mathit{TVF}(st, (w, q));$ \label{rl8}\\
        $\mathit{SA} + \gets \{w, q_{best}\};$ \label{rl9}\\
        $\mathit{SA} + \gets \mathit{DFSearch\_TVF}(N, S - q_{best},  W_N - w);$ \label{rl10}\\
    }{ \label{rl11}
        \For{each child node $N_i$ of N}{ \label{rl12}
        $\mathit{SA} + \gets \mathit{DFSearch\_TVF}(N_i, S, W_{N_i});$ \label{rl13}
        } \label{rl14}
    } \label{rl15}
    \KwRet $\mathit{SA}$; \label{rl16}
\end{algorithm}
\vspace{-0.5cm}
\subsection{Adaptive Algorithm}
In SC, tasks and workers are continuously changing and moving, requiring real-time updates and processing to ensure optimal assignment. This section describes the procedure of an adaptive algorithm that adjusts the task assignment based on current and predicted tasks, responding to fluctuations in supply and demand.

Algorithm~\ref{lta} outlines the complete process of the adaptive algorithm. The input is a continuous stream of arriving workers and tasks, and the output is the corresponding assignment result. When a worker or task appears on the platform, the optimal task planning assignment $\mathit{PA}$ is calculated by Algorithm~\ref{tpa} to achieve the global maximum revenue based on current and future tasks (lines~\ref{lta3}--\ref{lta9}). For each idle worker, the first task in their current planned sequence is executed (lines~\ref{lta10}--\ref{lta14}). Finally, we remove all workers and tasks 
with past deadlines, 
as well as completed tasks (line~\ref{lta15}).
\begin{algorithm}\label{lta}
\small
    \SetAlgoLined 
    \caption{Adaptive Algorithm}
    \SetKwInOut{Input}{Input}
    \SetKwInOut{Output}{Output}
    \Input{A stream of arriving objects $\{\delta_i|\delta_i \in \{w, s\} \}$}
    \Output{Task assignment $A$}
    $\mathit{PA} \gets \emptyset; A \gets \emptyset; W \gets \emptyset; S \gets \emptyset$; \label{lta1}\\
    \For{each arrive object $\delta$}{ \label{lta2}
    
        \eIf{$\delta$ is a worker}{ \label{lta3}
            $W \gets W + \delta$; \label{lta4}\\
            $\mathit{PA} \gets \mathit{TPA}(W, S)$; \label{lta5}
        }{ \label{lta6}
         $S \gets S + \delta$; \label{lta7}\\
        $\mathit{PA} \gets \mathit{TPA}(W, S)$;\label{lta8}
        }\label{lta9}
     
     \For{$\{w_i, \mathit{VR}(w_i)\} \in \mathit{PA}$}{ \label{lta10}
        \If{$w_i$ is ready to accept tasks and $len(\mathit{VR}(w_i)) \ge 1$}{ \label{lta11}
            $A.w_i \gets A.w_i + \mathit{VR}(w_i)[0]$; \label{lta12}
        }\label{lta13}
     }\label{lta14}
     Remove all workers/tasks whose deadlines have passed and completed tasks; \label{lta15}\\
    } \label{lta16}
   
    \KwRet $A$; \label{lta17}
\end{algorithm}

\begin{algorithm}
\small
    \label{tpa}
    \SetAlgoLined 
    \caption{Task Planning Assignment (TPA) Algorithm}
    \SetKwInOut{Input}{Input}
    \SetKwInOut{Output}{Output}
    \Input{A set of workers $W$, a set of tasks $S$}
    \Output{$\mathit{PA}$}

    $opt \gets 0; \mathit{PA} \gets \emptyset;$  \label{tpa1}\\

    \For{each worker $w \in W$}{ \label{tpa2}
    $RS_w \gets$ compute the reachable tasks for $w$; \label{tpa3}\\
    $Q_w \gets$ compute the set of maximal valid task sequences for $w$; \label{tpa4}
    }\label{tpa5}
    $G \gets$ construct worker dependency graph; \label{tpa6}

    \For{each connected graph $g \in G$}{ \label{tpa7}
    $\mathbb{X}_g \gets$ decompose $g$ into vertex cluster; \label{tpa8}\\
    $N_g \gets$ organize $\mathbb{X}_g$ into a tree; \label{tpa9}\\
    $ \mathit{PA} + \gets \mathit{DFSearch\_TVF}(N_g, S, W_{N_g});$ \label{tpa10}
    } \label{tpa11}
    \KwRet $\mathit{PA}$; \label{tpa12}
    
\end{algorithm}

Next, we elaborate on the details of the task planning assignment in Algorithm~\ref{tpa}. The inputs are the worker set $W$ and the task set $S$. The output is the optimal task planning assignment $\mathit{PA}$, which aims to maximize the total expected revenue from both current and future tasks.
We compute the reachable task set $\mathit{RS}_w$ and the set of maximal valid task sequence $Q_w$ for each worker $w$ (lines~\ref{tpa3}--\ref{tpa4}), followed by constructing the worker dependency graph (line~\ref{tpa6}). Then, for each connected component $g \in G $, we decompose $g$ into a set of vertex clusters using the MCS algorithm (line~\ref{tpa8}) and organize them into a tree with the RTC algorithm (line~\ref{tpa9}). Finally, we apply the $\mathit{DFSearch\_TVF}$ algorithm to find the optimal assignment for each sub-problem (line~\ref{tpa10}). Finally, we can get the optimal task assignment (line~\ref{tpa12}).

\section{Experimental Evaluation}
\label{V}
In this section, we conduct experiments to evaluate the effectiveness and efficiency of our proposed methods on two real datasets. All the experiments are implemented on an AMD Ryzen 7 CPU 3.20 GHz with 16GB RAM.
\subsection{Experimental Setup}
The experiments are conducted using two ride-hailing datasets: Yueche and DiDi\footnote{https://github.com/Yi107/Dataset-for-PCOM}.
For the Yueche dataset, generated between 9:00 and 11:00 on November 1st, 2016, each worker and task is associated with key information such as location, start time, due time, and the worker's reachable distance. The DiDi dataset has a similar information description akin to the Yueche dataset but covers the period from 21:00 to 23:00 on November 1st, 2016. 
In both datasets, the driver and passenger matches are used to simulate our problem, where we assume that passengers are tasks and drivers are workers in the SC system since drivers who pick up passengers at different locations may be good candidates to perform spatial tasks in the vicinity of those locations. The locations of workers correspond to where they are informed to pick up passengers. For each passenger, we use the location of the passenger and the time of the pick-up request as the task's location and publication time, respectively.
We use the data from the preceding hour (i.e., from 8:00 to 9:00 in DiDi dataset and from 20:00 to 23:00 in Yueche dataset) as historical data to train the task demand prediction model.
Table~\ref{tab:dataset} provides detailed information about these two real datasets, and Table~\ref{tab:experiment_parameters} shows our experimental settings, where the default values of all parameters are underlined.

\begin{table}[htbp]
    \centering
    \vspace{-0.5cm}
    \caption{Real datasets}
    \vspace{-0.2cm}
    \begin{tabular}{|c|c|c|c|c|}
    \hline
        Dataset& $|W|$ & $|S|$  & Time range & Region  \\ \hline
         Yueche &  624& 11,052  & 9:00 - 11:00& Chengdu \\ \hline
         DiDi & 760& 8,869  & 21:00 - 23:00 & Chengdu \\ \hline
    \end{tabular}
    \vspace{-0.5cm}
    \label{tab:dataset}
\end{table}

\begin{table}[htbp]
    \centering
    \vspace{-0.2cm}
    \caption{experiment parameters}
    \vspace{-0.2cm}
    \begin{tabular}{ll}
        \hline Parameters & Values \\ \hline
        Time interval $\Delta T$ (s) &  \underline{5}, 6, 7, 8, 9\\
        Number of tasks $|S|$ (Yueche) &  7K, 8K, 9K, 10K, \underline{11K}\\
        Number of tasks $|S|$ (DiDi) &   5K, 6K, 7K, 8K \underline{9K}\\
        Number of workers $|W|$ (Yueche) & 200, 300, 400, 500, \underline{600}\\
        Number of workers $|W|$ (DiDi) &  300, 400, 500, 600, \underline{700}\\
        Reachable distance of workers (km) & 0.05, 0.1, 0.5, \underline{1}, 5 \\
        Available time of workers $\mathit{off}-\mathit{on}$ (h) & 0.25, 0.5, 0.75, \underline{1}, 1.25 \\
        Valid time of tasks $e-p$ (s) & 10, 20, 30,  \underline{40}, 50 \\
        \hline
    \end{tabular}
    \vspace{-0.5cm}
\label{tab:experiment_parameters}
\end{table}

\subsection{Experimental Results}
\subsubsection{Performance of Task Demand Prediction}
We evaluate the performance of the task demand prediction phase and its impact to the subsequent task assignment. We choose $80\%$ location data of workers/tasks for training and $20\%$ for testing.

\textbf{Evaluation Methods.} We study the performance of the following methods.

\begin{enumerate}[i.]
\item LSTM~\cite{lstm}: A Long Short-Term Memory model featuring a fully connected layer and an activation function.
\item Graph-Wavenet~\cite{wu2019graph}: A spatial-temporal graph convolutional network, which integrates diffusion graph convolutions with 1D dilated convolutions.
\item DDGNN: Our Dynamic Dependency Graph Neural Network, which is based on multivariate time series learning.
\end{enumerate}

\textbf{Metrics.} To evaluate the accuracy of task demand prediction, we adopt the metric, Average Precision (AP), an important indicator used to measure the overall performance of the predictor at different thresholds. AP is calculated based on Precision and Recall. Precision is the ratio of correctly predicted positive examples (true positives) to all predicted positive examples, denoted by $\mathit{Precision} = \frac{\mathit{TP}}{\mathit{TP}+\mathit{FP}}$, where $\mathit{TP}$ denotes the number of true positives and   $\mathit{FP}$ denotes the number of false positive examples. Recall is the ratio of correctly predicted positive examples to all actual positive examples, 
denoted by $\mathit{Recall}=\frac{\mathit{TP}}{\mathit{TP}+\mathit{FN}}$, where $\mathit{FN}$ denotes the number of false negative examples. To calculate AP, we calculate Precision and Recall at each threshold and then integrate the area under the Precision-Recall curve to get the AP value.
The accuracy threshold is generated from the range $[0, 1]$ with the initial value $0$ and step $0.01$, which means the threshold is $0, 0.01, 0.02, ..., 1$.
We also evaluate the efficiency, including the \emph{training} and \emph{testing
time}.

\begin{figure}[htbp]
    \centering
    \vspace{-0.4cm}
    \setlength{\abovecaptionskip}{-0.1cm}
    \subfigure[Average Precision]{\includegraphics[width=0.48 \linewidth]{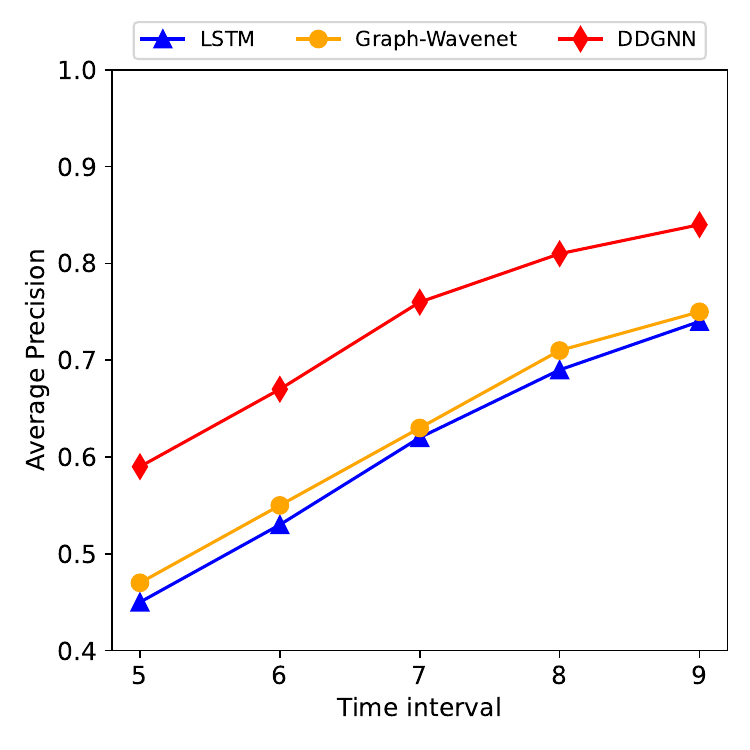}
		\label{fig:pred_AP_yc}}    
    \subfigure[Number of Assigned Tasks]{\includegraphics[width=0.48 \linewidth]{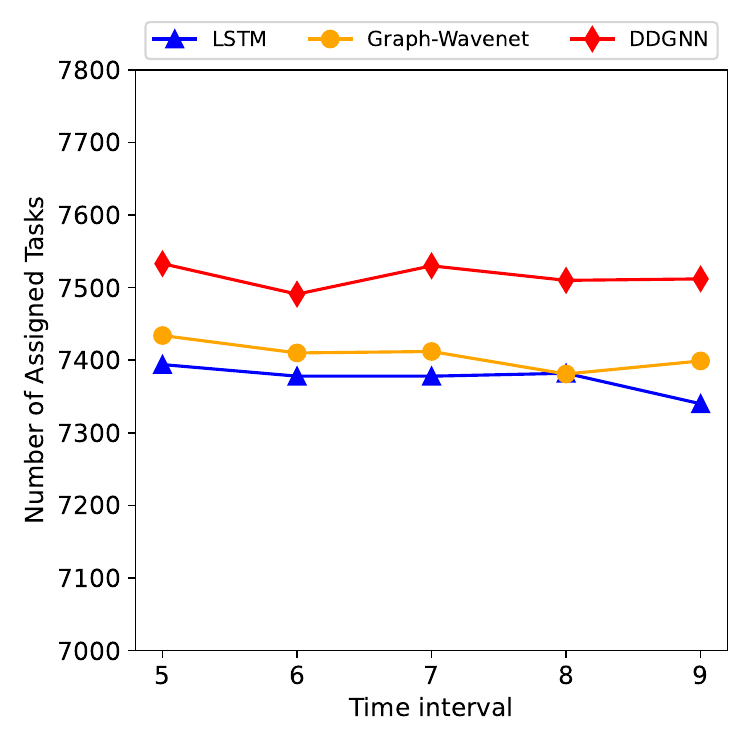}
		\label{fig:pred_R_yc}}
    
    \subfigure[Training Time]{\includegraphics[width=0.48 \linewidth]{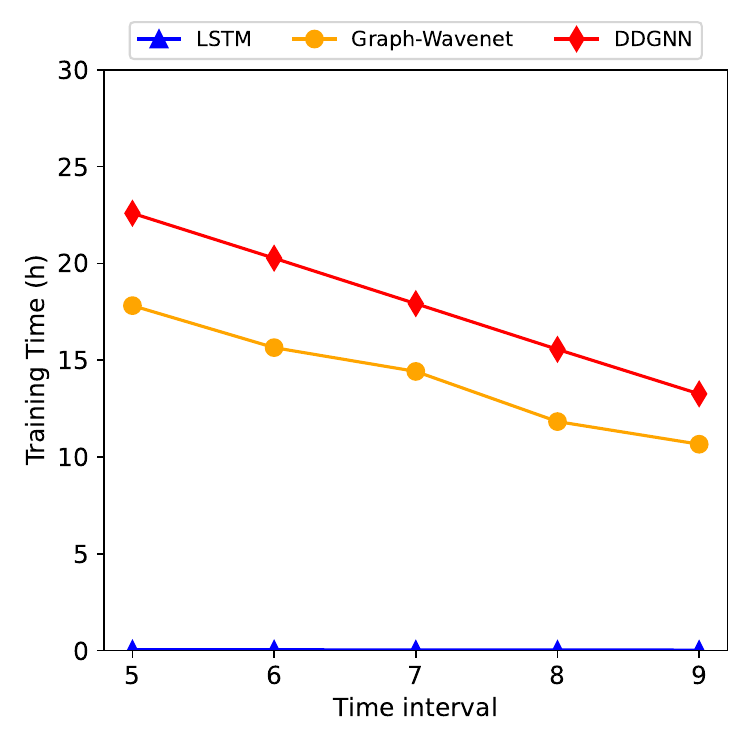}
		\label{fig:pred_train_time_yc}}  
    \subfigure[Testing Time]{\includegraphics[width=0.48 \linewidth]{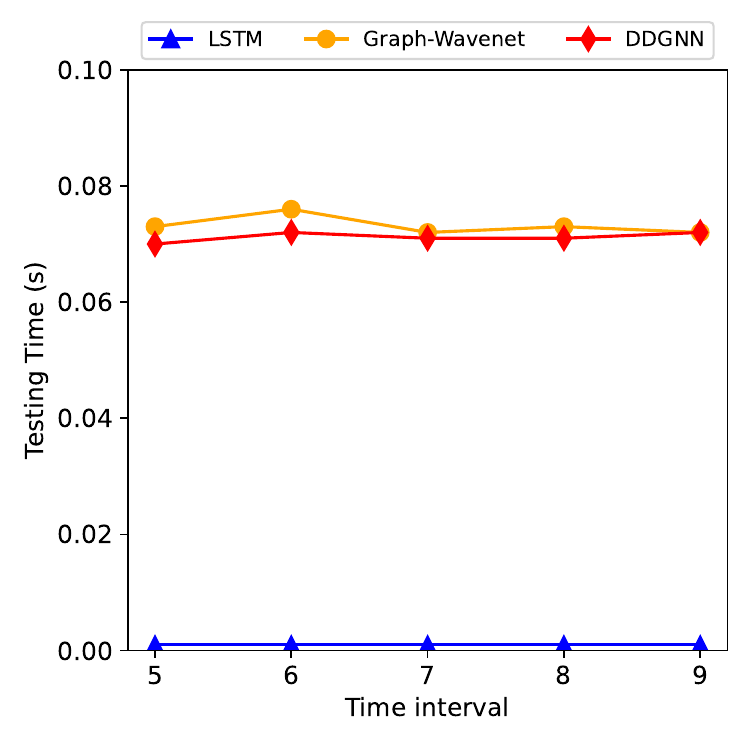}
		\label{fig:pred_test_time_yc}}
  
    \caption{Performance of Task Demand Prediction: Effect of $\Delta T$} on Yueche
    \label{example_time_period_yc}
    
\end{figure}

\begin{figure}[htbp]
    \centering
    \setlength{\abovecaptionskip}{-0.1cm}
    \vspace{-0.5cm}
    \subfigure[Average Precision]{\includegraphics[width=0.48 \linewidth]{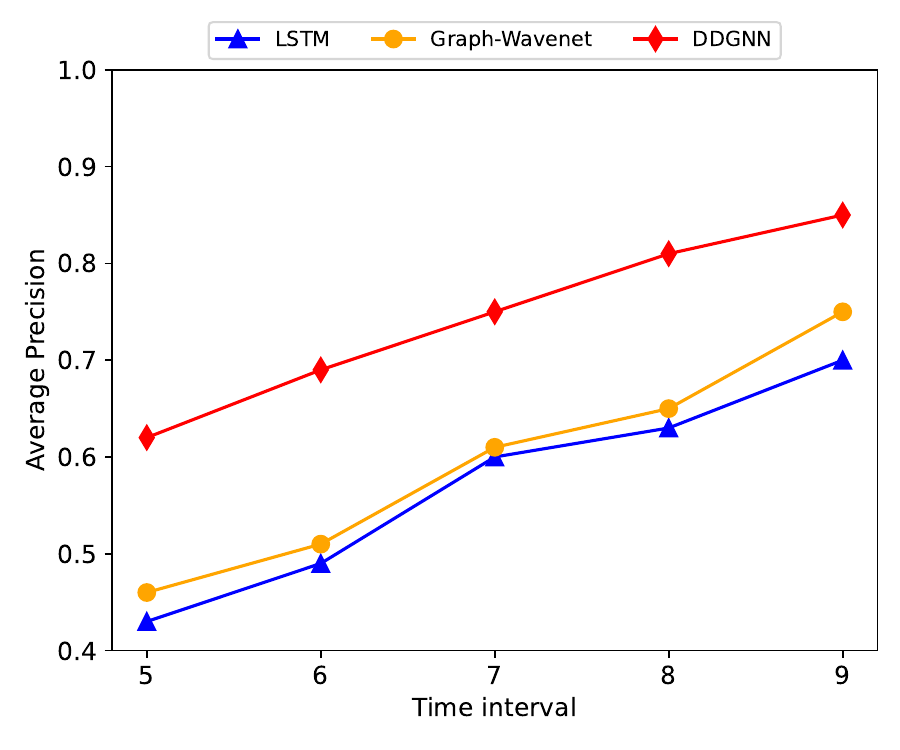}
		\label{fig:pred_AP_dd}}    
    \subfigure[Number of Assigned Tasks]{\includegraphics[width=0.48 \linewidth]{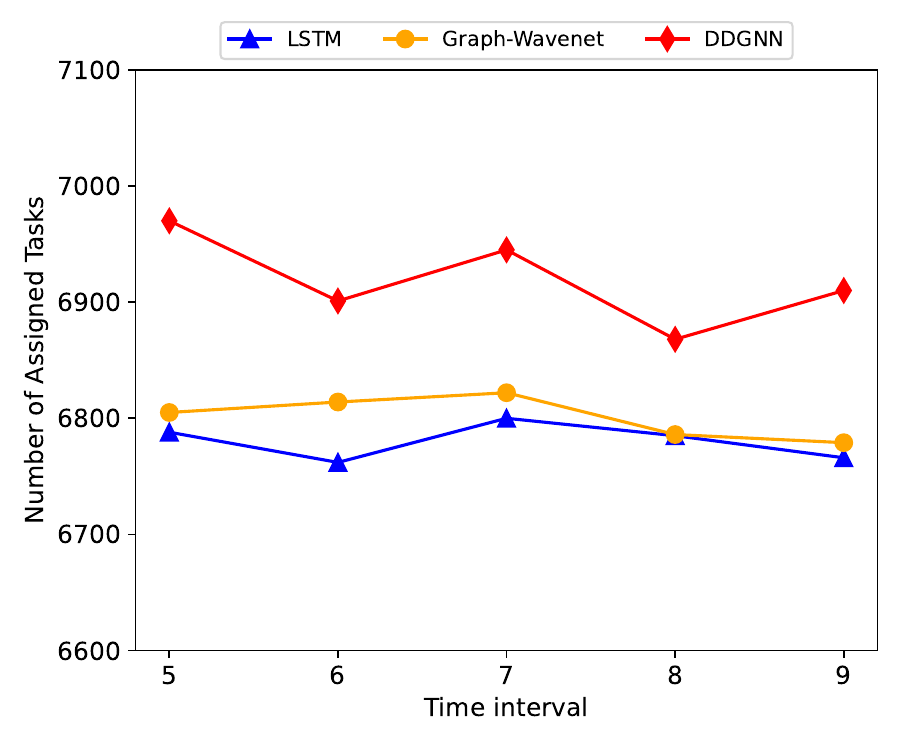}
		\label{fig:pred_R_dd}}
    
    \subfigure[Train Time]{\includegraphics[width=0.48 \linewidth]{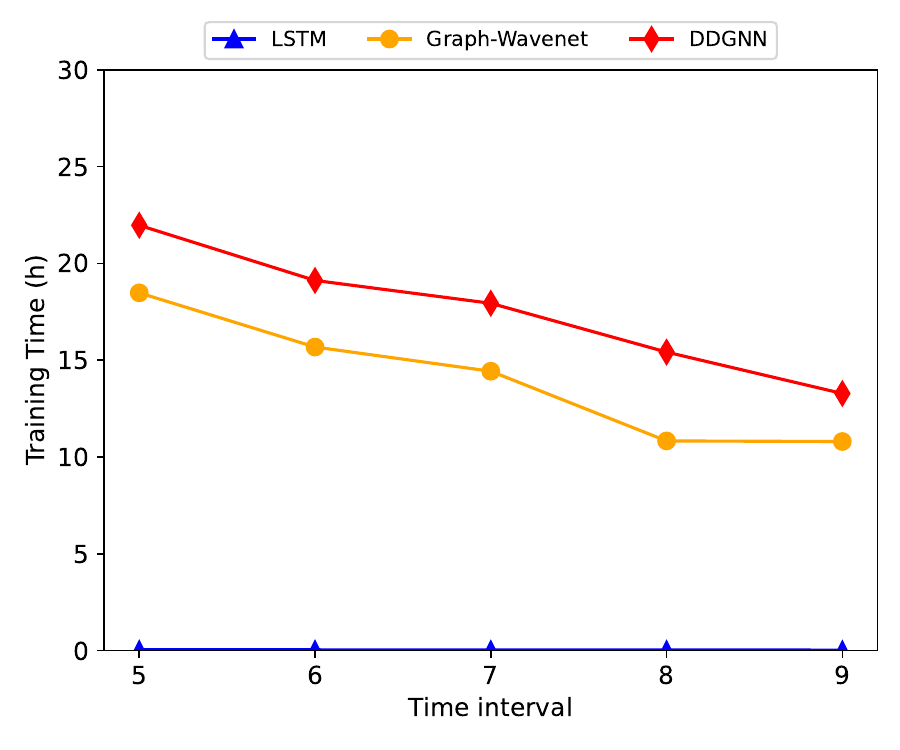}
		\label{fig:pred_train_time_dd}}  
    \subfigure[Test Time]{\includegraphics[width=0.48 \linewidth]{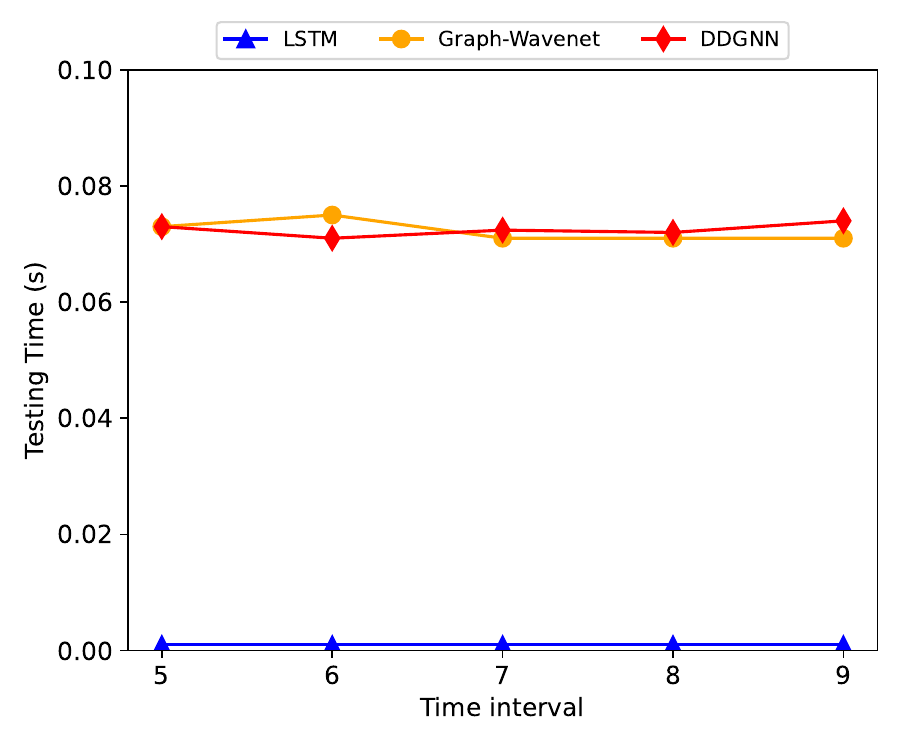}
		\label{fig:pred_test_time_dd}}
  
    \caption{Performance of Task Demand Prediction: Effect of $\Delta T$} on DiDi
    \label{example_time_period_dd}
    \vspace{-0.6cm} 
\end{figure}

\vspace{-0.3cm}
\textbf{Effect of $\Delta T$.} In the first set of experiments, we vary the time interval $\Delta T$  and study its effect on task demand prediction. As shown in Figs.~\ref{fig:pred_AP_yc} and~\ref{fig:pred_AP_dd}, the average precision for all approaches shows a similar increasing trend as $\Delta T$ grows.
Regardless of the time intervals, DDGNN  consistently achieves the highest average precision, followed by Graph-Wavenet and LSTM in both Yueche and DiDi datasets, which demonstrates the superiority of DDGNN for predicting task demand. 
The task assignment results of all methods are not affected by the time intervals, as shown in Figs.~\ref{fig:pred_R_yc} and~\ref{fig:pred_R_dd}. However, the assignment results heavily depend on the average precision for a specific $\Delta T$, as higher average precision generally correlates with more accurately predicted tasks. DDGNN outperforms all other methods across all values of $\Delta T$, confirming the effectiveness of our proposed algorithm.
In Figs.~\ref{fig:pred_train_time_yc} and~\ref{fig:pred_train_time_dd}, the training time decreases with longer time interval $\Delta T$, due to the corresponding reduction in training data.
Figs.~\ref{fig:pred_test_time_yc} and~\ref{fig:pred_test_time_dd} show that the testing time remains 
constant across different time intervals, because the model parameters are fixed.

\subsubsection{Performance of Task Assignment}
In this section, we evaluate the effectiveness and efficiency of the task assignment algorithms.

\textbf{Evaluation Methods.} We study the following methods.

\begin{enumerate}[i.]
    \item Greedy: The Greedy task assignment method that assigns each worker the maximal valid task set from the unassigned tasks until all the tasks are assigned or all the workers are exhausted.
    \item FTA: The Fixed Task Assignment method that involves assigning each worker 
    a fixed, predetermined sequence of tasks to be completed in order while satisfying spatio-temporal constraints, which utilizes the worker dependency separation and DFSearch techniques for task assignments.
    \item DTA: The Dynamic Task Assignment method that dynamically adjusts the task sequence for each worker in real-time according to the spatio-temporal distributions of workers and tasks, without relying on prediction, which also employs the worker dependency separation and DFSearch techniques to refine task assignments.
    \item DTA+TP: The DTA method that assigns tasks based  on the task demand prediction.
    \item DATA-WA: The DTA+TP method integrates the task value function (TVF) into task assignment.
    
\end{enumerate}

\textbf{Metrics.} 
Two main metrics are compared for the above methods, i.e.,
the total number of assigned tasks and CPU time for finding task assignments. Specially, 
the CPU time is the average time cost of performing task assignment at each time instance.

\begin{figure}[t]
    \centering
    \setlength{\abovecaptionskip}{-0.1cm}
    \vspace{-0.3cm}
    \subfigure[Number of Assigned Tasks (Yueche)]{\includegraphics[width=0.48 \linewidth]{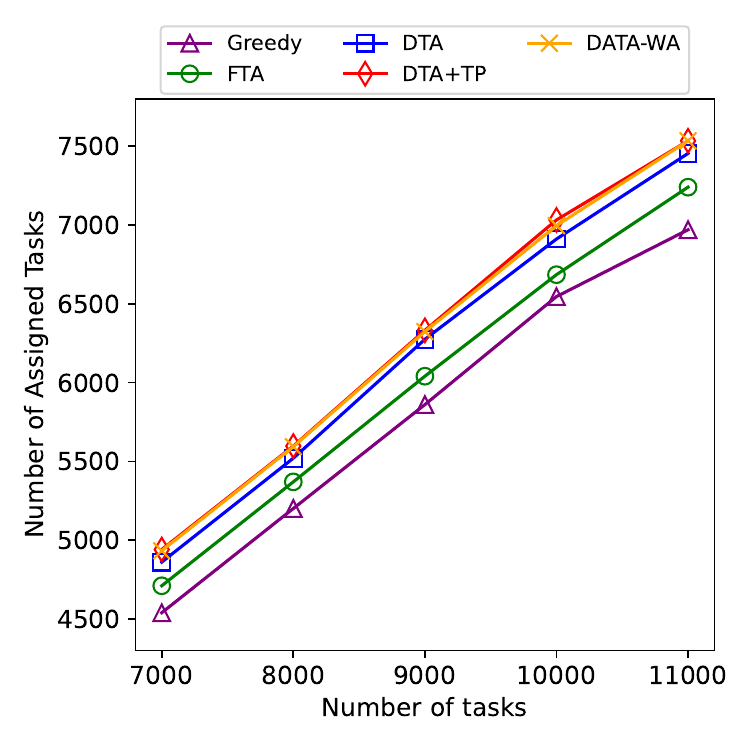}
		\label{fig:t_yc_R}}    
    \subfigure[CPU Cost (Yueche)]{\includegraphics[width=0.48 \linewidth]{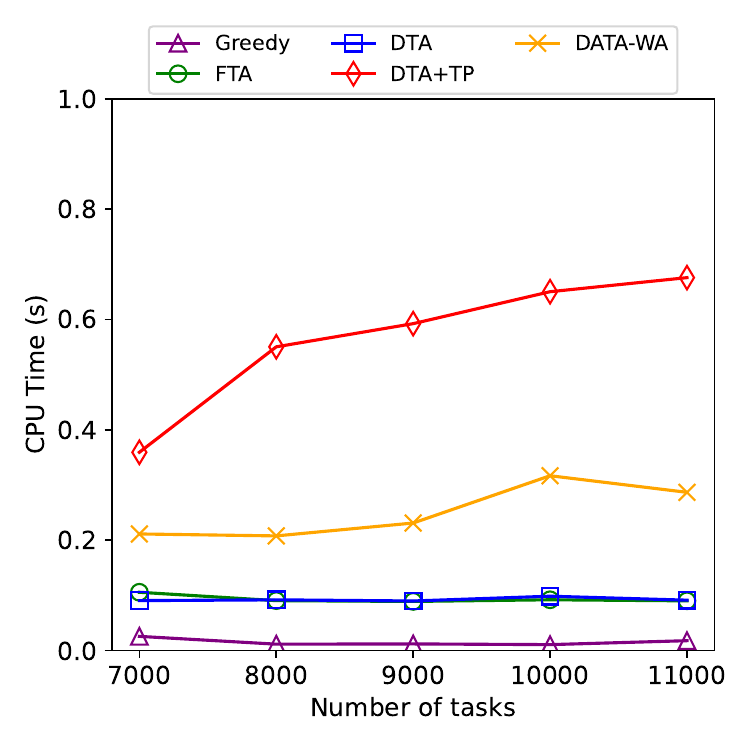}
		\label{fig:t_yc_cpu}}
    
    \subfigure[Number of Assigned Tasks (DiDi)]{\includegraphics[width=0.48 \linewidth]{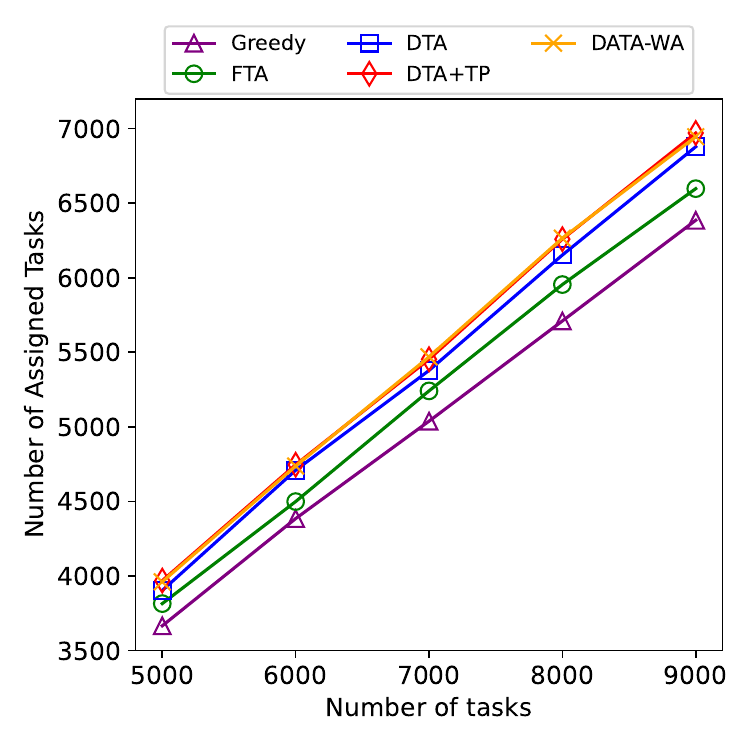}
		\label{fig:t_dd_R}}  
    \subfigure[CPU Cost (DiDi)]{\includegraphics[width=0.48 \linewidth]{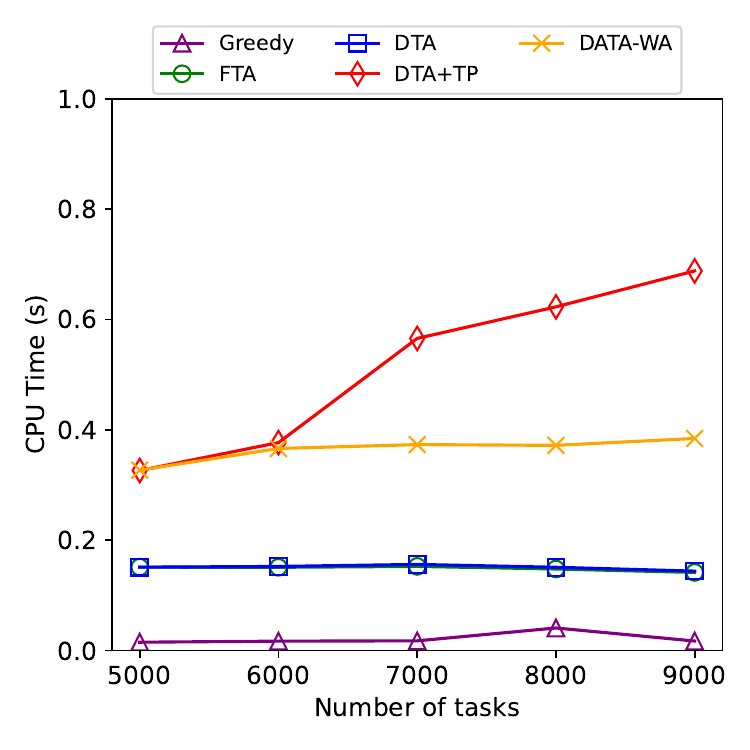}
		\label{fig:t_dd_cpu}}
  
    \caption{Performance of Task Assignment: Effect of $|S|$}
    \label{example_t}
    \vspace{-0.6cm}
\end{figure}

\textbf{Effect of $|S|$.} To study the scalability of the compared methods, we generate five datasets containing 7,000 to 11,000 tasks by randomly selecting from the Yueche dataset and five datasets containing 5,000 to 9,000 tasks from the DiDi dataset. In Figs.~\ref{fig:t_yc_R} and \ref{fig:t_dd_R}, as $|S|$ increases, 
it is  more likely that each worker will be assigned more tasks, 
resulting in an increase in the number of assigned tasks across all methods for both the Yueche and DiDi datasets.  
DTA+TP outperforms the others, demonstrating the effectiveness of our proposed method. In terms of CPU cost, as shown in Figs.~\ref{fig:t_yc_cpu} and \ref{fig:t_dd_cpu}, the CPU time for DTA, Greedy and FTA remains 
relatively constant regardless of $|S|$.
However, for the DTA+TP and DATA-WA methods, the CPU time exhibits an increasing trend with $|S|$ due to the need to search more tasks. 
DATA-WA achieves a similar number of assigned tasks as DTA+TP with less computation, highlighting the effectiveness of RL-based optimization.

\textbf{Effect of $|W|$.} To study the effect of $|W|$, we generate five datasets containing 200 to 600 workers by random selection from the Yueche dataset and five datasets containing 300 to 700 workers from the DiDi dataset. In Figs.~\ref{fig:w_yc_R} and~\ref{fig:w_dd_R}, as $|W|$ increases, more tasks can be assigned to more workers. 
Thus, the number of assigned tasks of all methods increases in both Yueche and DiDi datasets. DTA+TP results in 
the highest number of assigned tasks, followed by DATA-WA, DTA, FTA and Greedy. 
DTA assigns more tasks than FTA since workers can adjust their task sequence when demand and supply change. 
Although DTA+TP assigns more tasks than DATA-WA, it is more time-consuming than DATA-WA, as shown in 
Figs.~\ref{fig:w_yc_cpu} and~\ref{fig:w_dd_cpu}.
The CPU time of DATA-WA is only $42.4\%$--$65.7\%$ of that of DTA+TP.
Greedy is the fastest algorithm and is almost unaffected by $|W|$, but it assigns the fewest tasks. 
Although Greedy and FTA are more efficient than our proposed approaches, they assign fewer tasks. DATA-WA achieves the best balance between task assignment and computational cost compared to other methods.

\begin{figure}[t]
    \setlength{\abovecaptionskip}{-0.1cm}
    \centering
    \vspace{-0.5cm}
    \subfigure[Number~of Assigned Tasks (Yueche)]{\includegraphics[width=0.48 \linewidth]{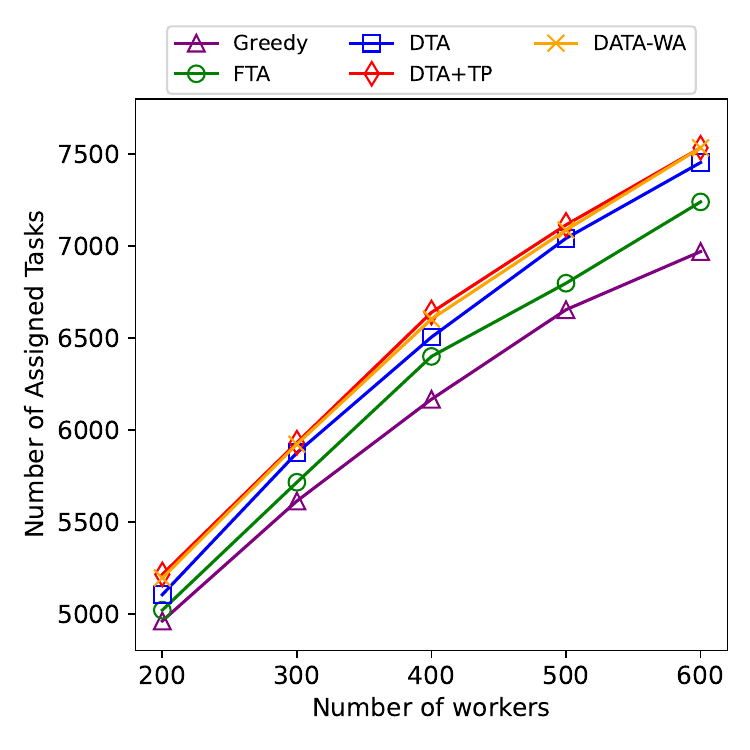}
		\label{fig:w_yc_R}}    
    \subfigure[CPU Cost (Yueche)]{\includegraphics[width=0.48 \linewidth]{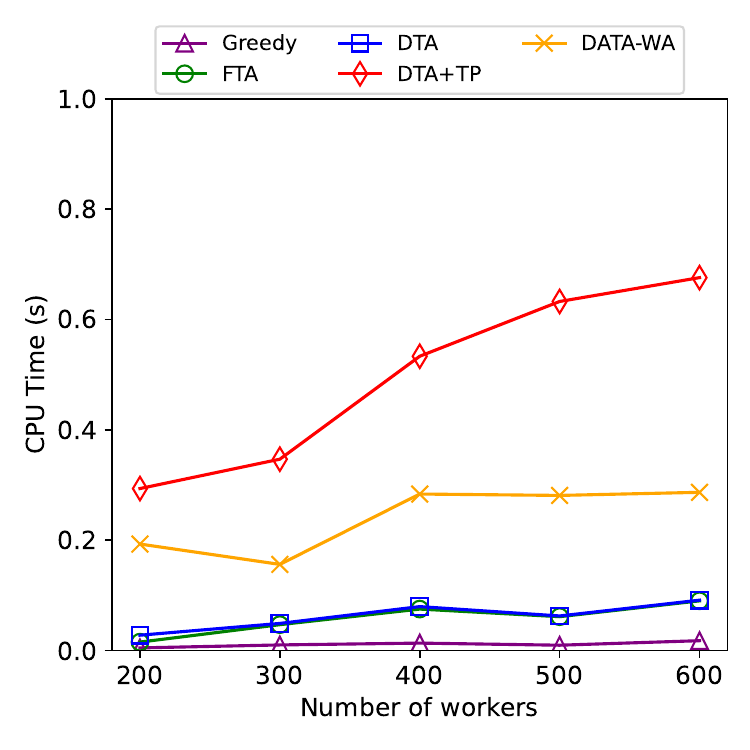}
		\label{fig:w_yc_cpu}}

    \subfigure[Number of Assigned Tasks (DiDi)]{\includegraphics[width=0.48 \linewidth]{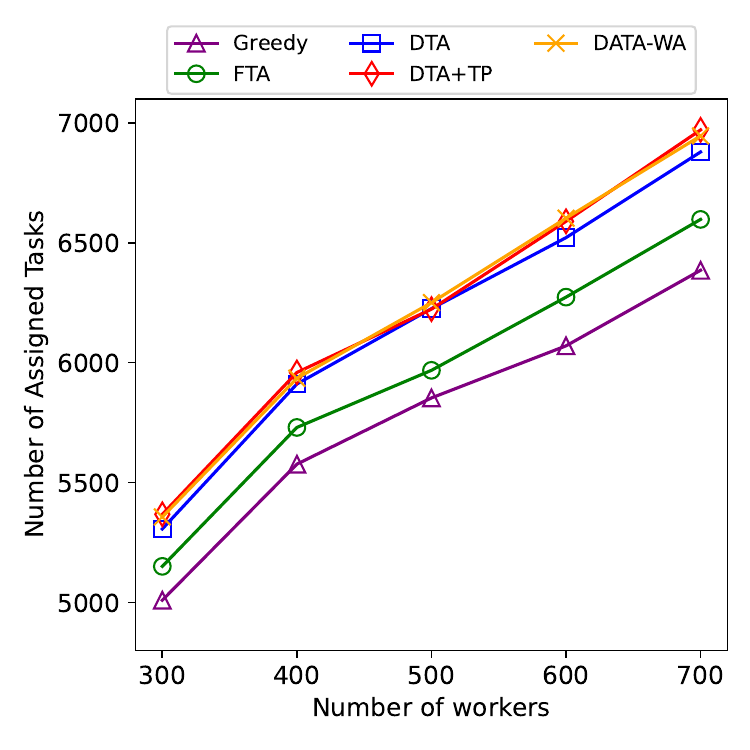}
		\label{fig:w_dd_R}}  
    \subfigure[CPU Cost (DiDi)]{\includegraphics[width=0.48 \linewidth]{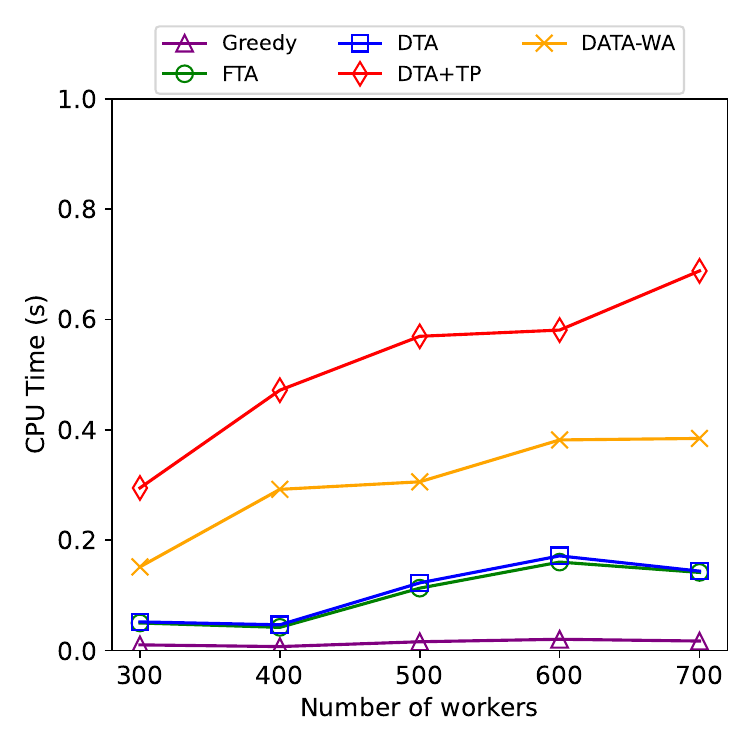}
		\label{fig:w_dd_cpu}}
  
    \caption{Performance of Task Assignment: Effect of $|W|$}
    \label{example_w}
    \vspace{-0.6cm}
\end{figure}

\textbf{Effect of $d$.} We also study the effect of workers’ reachable distance $d$. As shown in Figs.~\ref{fig:dis_yc_R} and \ref{fig:dis_dd_R}, the numbers of assigned tasks generated by all methods have a growing tendency as $d$ increases. This is due to the fact that workers with larger reachable distances 
have more available task assignments.
However, we also notice that the effect of $d$ becomes less significant on all methods when $d$ is greater than or equal to 0.5 km, 
indicating that the methods remain unaffected beyond this threshold.
In addition, when $d$ exceeds 0.5 km, DTA+TP and DATA-WA outperform the others, demonstrating the effectiveness of our proposed algorithms again.
In  Figs.~\ref{fig:dis_yc_cpu} and \ref{fig:dis_dd_cpu}, the CPU cost of all approaches increases with larger reachable distances. This is because that the number of available tasks to be assigned in a given time instance grows when $d$ gets larger, leading to longer computational cost.
\begin{figure}[t]
    \centering
    \setlength{\abovecaptionskip}{-0.1cm}
    \subfigure[Number of Assigned Tasks (Yueche)]{\includegraphics[width=0.48 \linewidth]{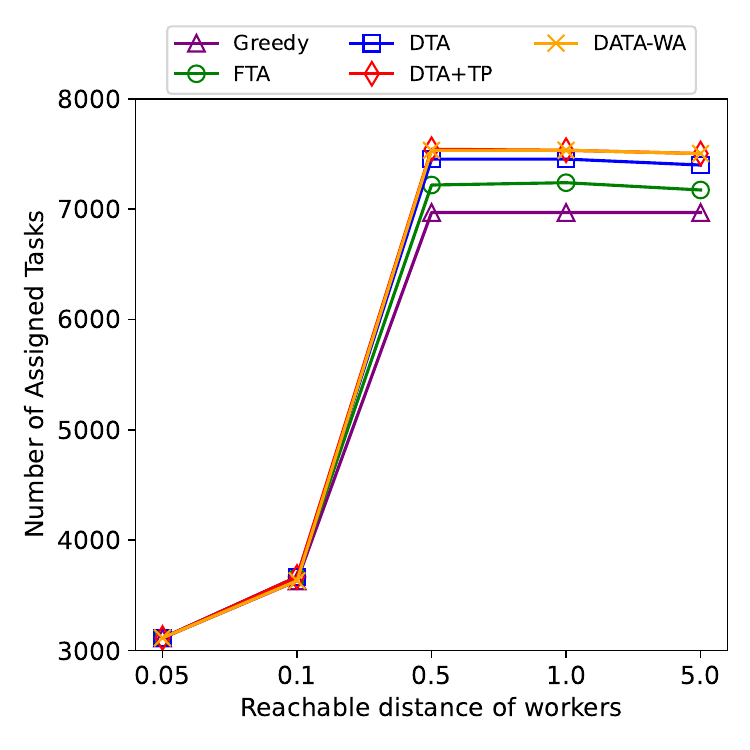}
		\label{fig:dis_yc_R}}    
    \subfigure[CPU Cost (Yueche)]{\includegraphics[width=0.48 \linewidth]{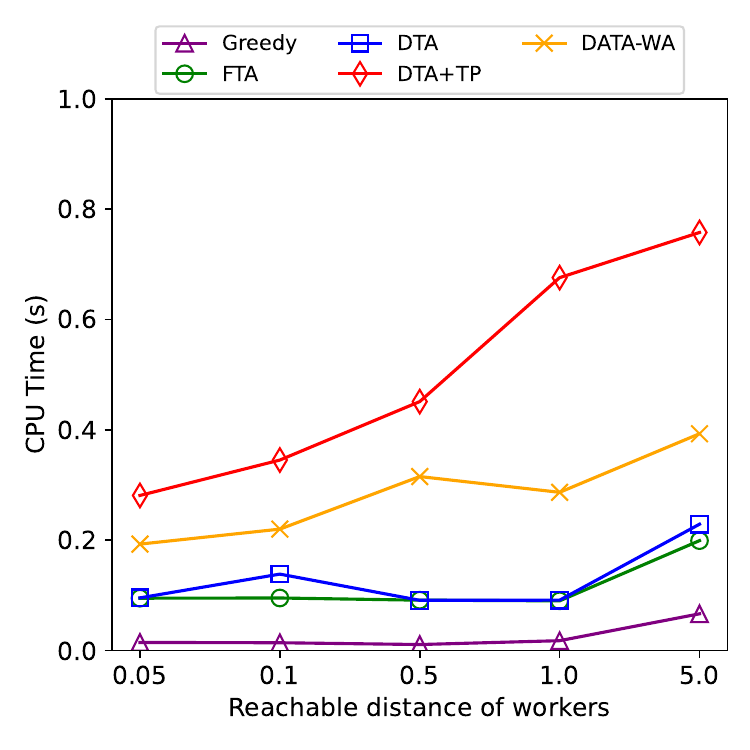}
		\label{fig:dis_yc_cpu}}

    \subfigure[Number of Assigned Tasks (DiDi)]{\includegraphics[width=0.48 \linewidth]{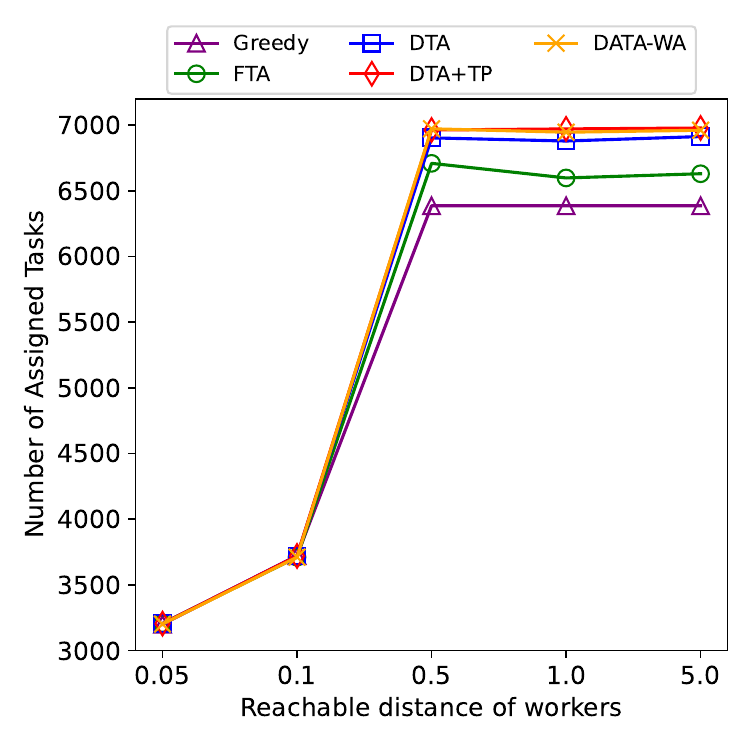}
		\label{fig:dis_dd_R}}  
    \subfigure[CPU Cost (DiDi)]{\includegraphics[width=0.48 \linewidth]{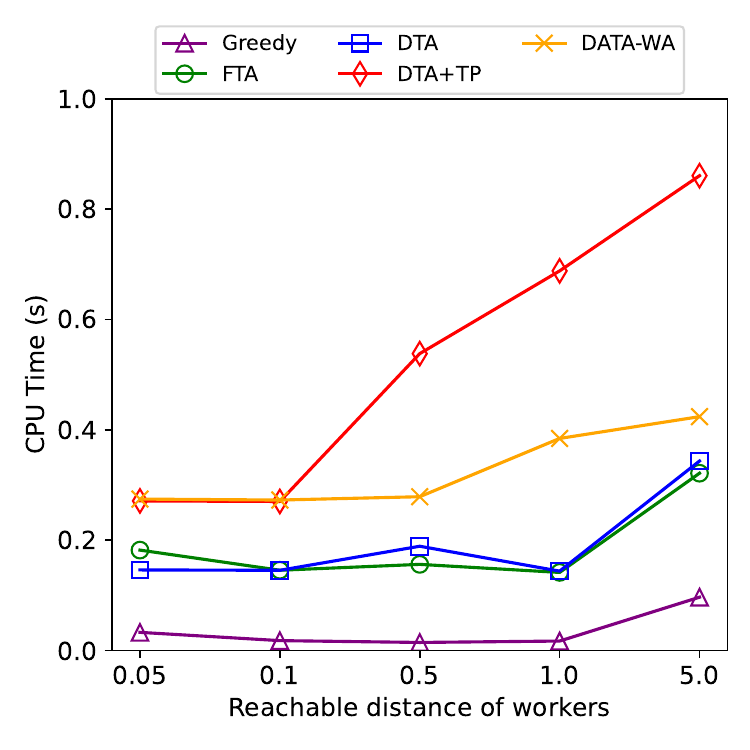}
		\label{fig:dis_dd_cpu}}
     \caption{Performance of Task Assignment: Effect of $d$}
    \label{example_d}
\end{figure}

\begin{figure}[htbp]
    \centering
    \vspace{-0.5cm} 
    \setlength{\abovecaptionskip}{-0.1cm}
    \subfigure[Number of Assigned Tasks (Yueche)]{\includegraphics[width=0.48 \linewidth]{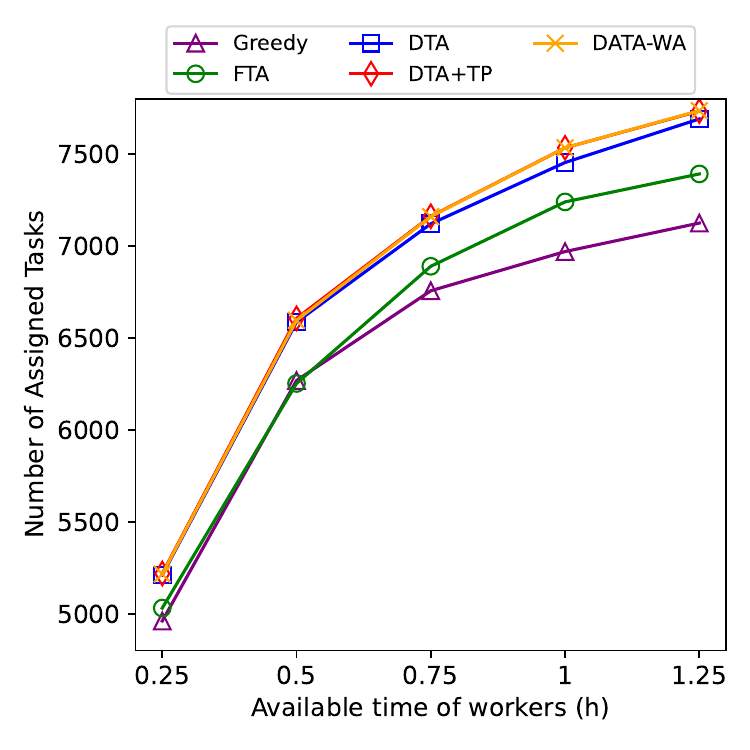}
		\label{fig:wd_yc_R}}    
    \subfigure[CPU Cost (Yueche)]{\includegraphics[width=0.48 \linewidth]{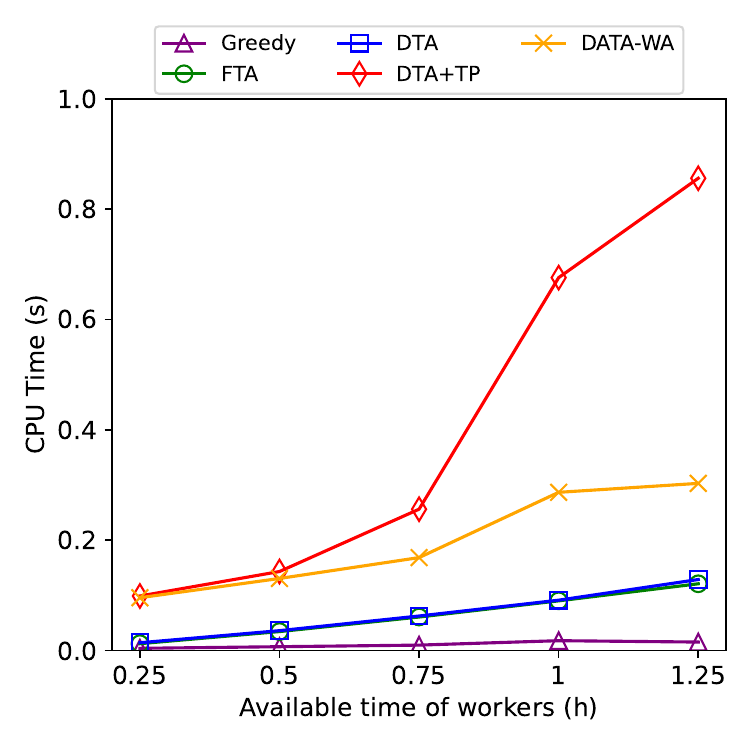}
		\label{fig:wd_yc_cpu}}

    \subfigure[Number of Assigned Tasks (DiDi)]{\includegraphics[width=0.48 \linewidth]{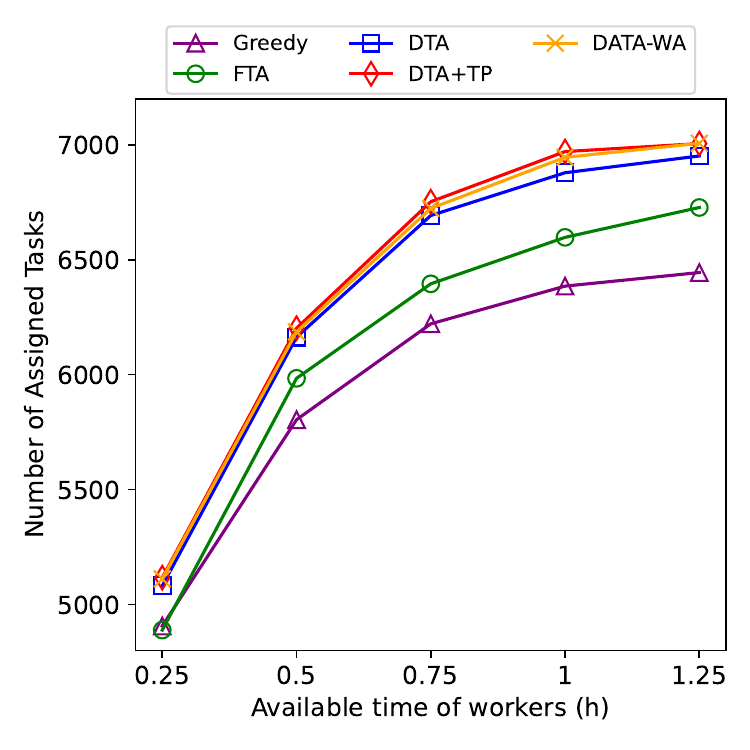}
		\label{fig:wd_dd_R}}  
    \subfigure[CPU Cost (DiDi)]{\includegraphics[width=0.48 \linewidth]{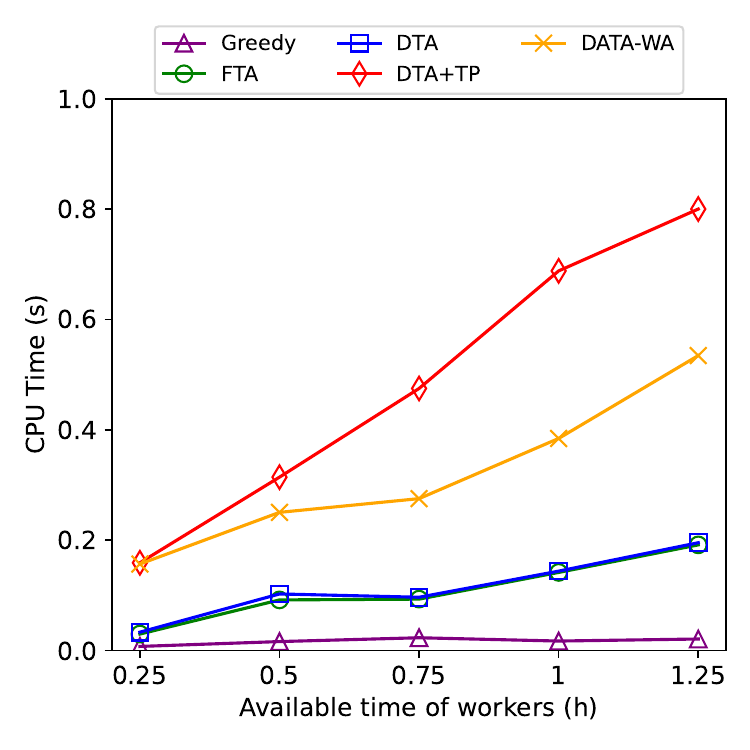}
		\label{fig:wd_dd_cpu}}
  
    \caption{Performance of Task Assignment: Effect of $\mathit{off} - \mathit{on}$}
    \label{example_wd}
    \vspace{-0.8cm} 
\end{figure}

\textbf{Effect of $\mathit{off}-\mathit{on}$.} We further evaluate the effect of workers' available time. As shown in Figs.~\ref{fig:wd_yc_R} and \ref{fig:wd_dd_R}, the number of assigned tasks of all methods gradually increases as the available time of workers increases. This is because that more workers are available for each task. 
DTA+TP and DATA-WA achieve similar numbers of assigned tasks, 
but DATA-WA has significantly lower CPU time compared to DTA+TP, as shown in 
Figs.~\ref{fig:wd_yc_cpu} and \ref{fig:wd_dd_cpu}. 
This reduction in CPU time is due to the task value function minimizing multiple backtracking processes. In addition, the CPU time for all methods gradually increases since the number of available workers for each task increases, resulting in a more extensive search space. 

\textbf{Effect of $e-p$.} We then analyze the impact of the valid time $e-p$ of tasks. As shown in Figs.~\ref{fig:dead_yc_R} and~\ref{fig:dead_dd_R}, the number of assigned tasks increases across all methods as the valid time extends. This is because tasks have a higher probability of being assigned to suitable workers when they have more valid time. Similar to the previous results, our proposed DTA+TP and DATA-WA outperform the others, which confirms the superiority of our proposals. In Figs.~\ref{fig:dead_yc_cpu} and~\ref{fig:dead_dd_cpu}, the CPU times for all approaches increase with longer task valid times, due to the higher number of worker-task assignments.

\begin{figure}[t]
    \centering
    \setlength{\abovecaptionskip}{-0.1cm}
    \vspace{-0.4cm}
    
    \subfigure[Number of Assigned Tasks (Yueche)]{\includegraphics[width=0.48 \linewidth]{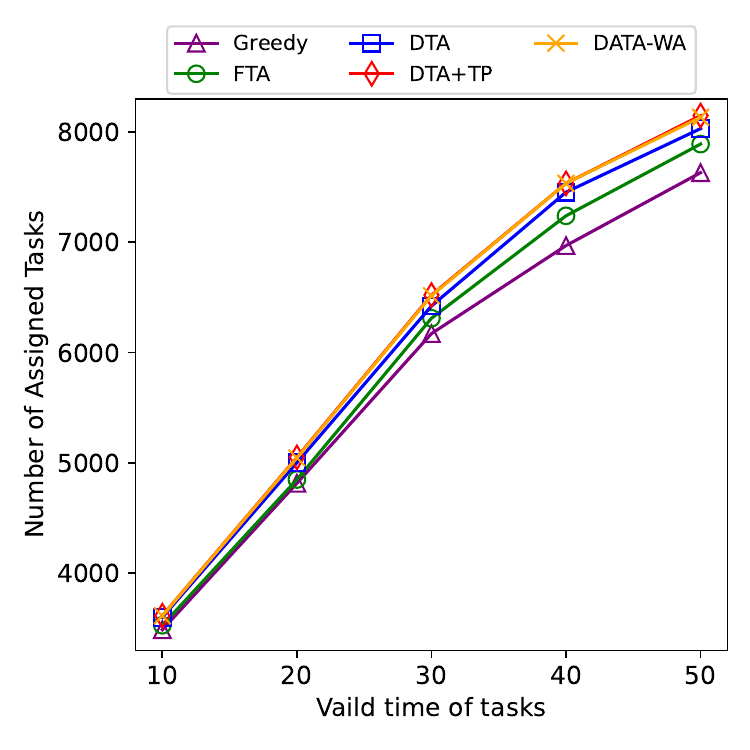}
		\label{fig:dead_yc_R}}    
    \subfigure[CPU Cost (Yueche)]{\includegraphics[width=0.48 \linewidth]{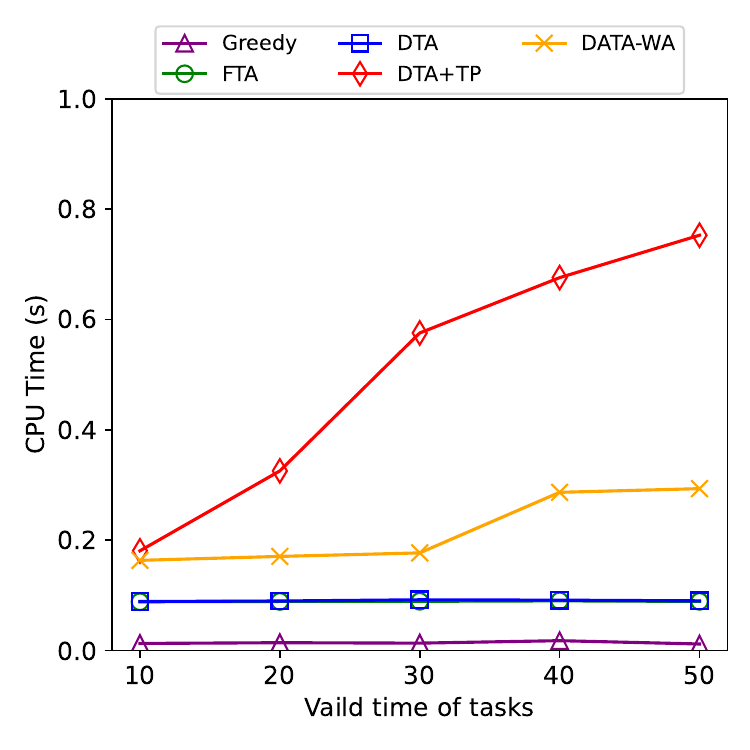}
		\label{fig:dead_yc_cpu}}

    \subfigure[Number of Assigned Tasks (DiDi)]{\includegraphics[width=0.48 \linewidth]{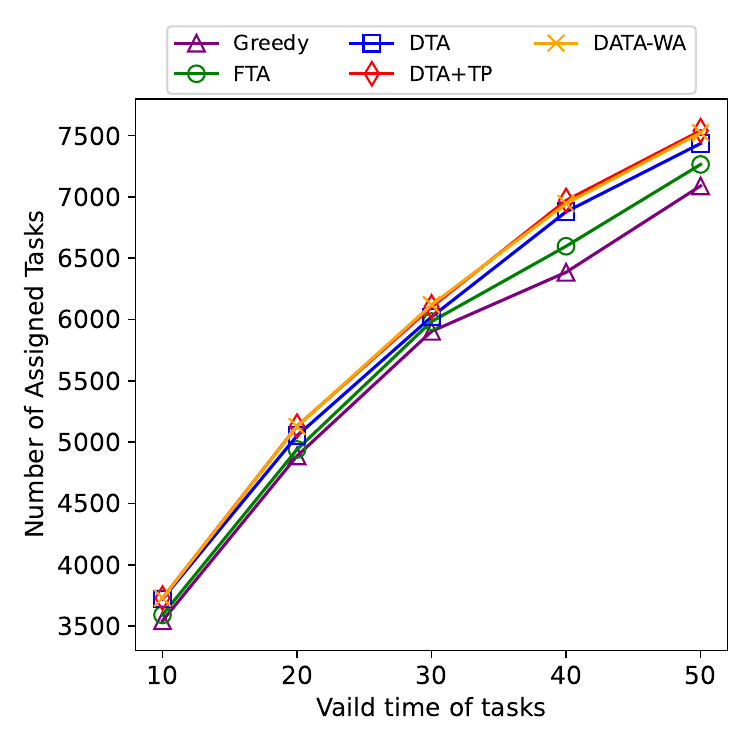}
		\label{fig:dead_dd_R}}  
    \subfigure[CPU Cost (DiDi)]{\includegraphics[width=0.48 \linewidth]{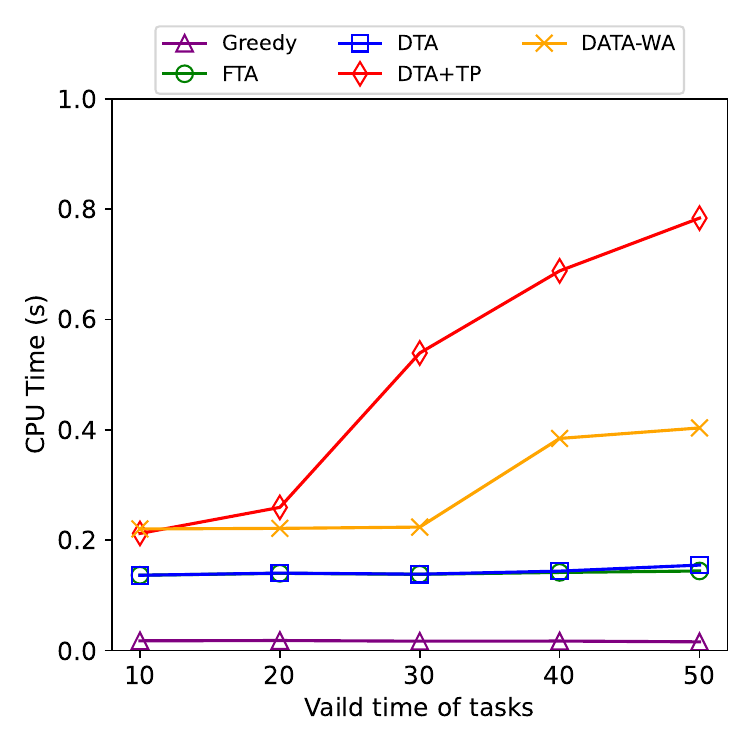}
		\label{fig:dead_dd_cpu}}
  
    \caption{Performance of Task Assignment: Effect of $e-p$}
    \label{example_dd}
    \vspace{-0.8cm}
\end{figure}

\section{related work}
\label{VI}
Spatial Crowdsourcing (SC) is an innovative form of crowdsourcing that utilizes smart device carriers as workers who travel to specific locations to complete spatial tasks~\cite{zheng2022crowdsourced, 9474868, 9101624, 9101527, 8731525, zhao2023workerchurn, zhao2021fairness, zhao2019destination, zhao2020predictive, cheng2020real, deng2024task,yanwww2022,liu2022task,ChenZZ22,lai2022loyalty,zhao2023Coalition}.
SC can be categorized based on the task publication mode into Server Assigned Tasks (SAT) mode and Worker Selected Tasks (WST) mode.
In SAT mode, the server assigns tasks to nearby workers with the aim of optimizing system performance. This includes objectives such as maximizing the total number of tasks assigned~\cite{zhao2020predictive, zhao2019destination} or maximizing the overall payoff from these assignments~\cite{zhao2023workerchurn, zhao2021fairness}.
Research in this area has also explored factors like worker preference~\cite{zhao2023preference}, fairness~\cite{zhao2021fairness}, social networks~\cite{cheng2019event, lian2020geography, wang2019mcne}, and worker cooperation~\cite{li2023competition} in task assignment.
Ye et al.~\cite{ye2021task} propose a reinforcement learning based method to achieve the task allocation, and propose a graph neural network method with the attention mechanism to learn the embeddings of allocation centers, delivery points and workers.
Zhao et al.~\cite{zhao2023adataskrec} utilize graph convolutional network to model the geographical distance between out-of-town POIs, generating the out-of-town POI embeddings and to learn workers’ out-of-town preferences.
Wang et al.~\cite{Air-Ground} propose a novel communication-based multi-agent deep reinforcement learning method for data acquisition in urban environments.
Rao et al.~\cite{Can_You2024} propose an online framework by extending multi-agent reinforcement learning with careful augmentation to optimize the profit of order-serving and the data utility of crowdsensing.

However, most of the research conducted so far is based on the assumption of static offline scenarios, where the demand and supply between workers and tasks are known a prior. These studies focus on immediate task assignments, often struggling to adapt to the rapid and unpredictable changes in task demand and worker availability. This highlights the urgent need for further exploration and innovation in this field to ensure optimal task assignments over the long term.

However, SC is a real-time platform where workers and tasks occur dynamically. Recent studies consider task and worker prediction to solve online task assignment problems in SC. Zhai et al.~\cite{zhai2019seqst} propose a novel deep learning model to address the task prediction problem, which captures the temporal dependencies of historical task appearance in sequences at several time scales. Wei et al.~\cite{9816080} propose a location-and-preference joint prediction model to predict workers' locations and preference jointly at each timestamp. They also design a greedy multi-attribute joint task assignment algorithm to maximize the average number of completed tasks under constraints. Peng et al.~\cite{Peng2023} introduce a spatio-temporal prediction strategy that combines a gated recurrent unit and a variational autoencoder for crowdsourcing task prediction. However, the previous studies only concentrate on predicting task distribution, neglecting task demand dynamics, which is essential for accurately forecasting the spatio-temporal distribution of tasks.


To solve the problem of task demand prediction, Yang et al.~\cite{yang2023batch} divide supply and demand into five degrees and use a Markov Predictor to predict the future degree of supply and demand. However, they do not consider the demand dependency in different regions, which is essential for predicting task demand. The closest related research to ours is~the work~\cite{zhao2020predictive}, which hybrids different learning models to predict the locations and routes of future workers and employs a graph embedding approach to estimate the distribution of future tasks but assigns each worker a fixed task sequence. However, in this work, we predict future task demands and update the assigned task sequence in real time for workers to ensure optimal task assignment over long time.

\section{conclusion}
\label{VII}
We propose and offer solutions to a problem termed Adaptive Task Assignment (ATA) in spatial crowdsourcing (SC), which aims to maximize the number of assigned tasks by dynamically adjusting task assignments in response to changing demand and supply. An SC framework, namely \textbf{\underline D}emand-based \textbf{\underline A}daptive \textbf{\underline T}ask \textbf{\underline A}ssignment with dynamic \textbf{\underline W}orker \textbf{\underline A}vailability windows (DATA-WA), is proposed. It consists of two phases:  task demand prediction and task assignment. To the best of our knowledge, our approach is the first to consider the dependency relationships between task demands in different regions. We employ a multivariate time series learning approach to predict future task demands and then adjust task assignments based on real-time and predicted task demand dynamics, as well as dynamic worker availability. An empirical study on real data confirm that our proposed framework significantly improves the effectiveness and efficiency of task assignment. In our future studies, we plan to extend our validation by conducting experiments with human participants or other real-world datasets to understand how our solutions perform in practical scenarios.

\section{Acknowledgement}
This work is partially supported by NSFC (No. 62472068),  Shenzhen Municipal Science and Technology R\&D Funding Basic Research Program (JCYJ20210324133607021), and Municipal Government of Quzhou under Grant (No. 2023D044), and Key Laboratory of Data Intelligence and Cognitive Computing, Longhua District, Shenzhen.

\newpage
\bibliographystyle{IEEEtran}
\bibliography{ref}

\nocite{wu2020connecting}

\end{document}